\newtheorem{theorem}{Theorem}
\newtheorem{lemma}{Lemma}
\newenvironment{proof}{Proof:}{\hfill \tikz \draw[fill] (0,0) rectangle (0.25,0.25);}
\icmltitlerunning{PRIL: Perceptron Ranking Using Interval Labeled Data}
\begin{document} 
\def \bw {\tilde{\mathbf{w}}}
\def \by {\tilde{\mathbf{y}}}
\def \bx {\tilde{\mathbf{x}}}
\def \bz {\tilde{\mathbf{Z}}}
\def \X {\mathcal{X}}
\def \Y {\mathcal{Y}}
\def \xx {\mathbf{x}}
\def \I {\mathbb{I}}
\def \R {\mathbb{R}}
\def \P {\mathcal{P}}
\def \K {\mathbf{k}}
\def \yy {\mathbf{y}}
\def \zz {\mathbf{z}}
\def \ee {\mathbf{e}}
\def \vv {\mathbf{v}}
\def \ww {\mathbf{w}}
\def \dd {\mathbf{d}}
\def \thetaa  {\mbox{\boldmath $\theta$}}
\def \phii  {\mbox{\boldmath $\phi$}}
\def \I {\mathbb{I}}
\def \zero {\mathbf{0}}
\def \bI {\bar{I}}
\def \sM {\mathcal{M}}
\twocolumn[
\icmltitle{PRIL: Perceptron Ranking Using Interval Labeled Data}

\icmlauthor{Naresh Manwani}{naresh.manwani@iiit.ac.in}
\icmladdress{IIIT Hyderabad, India}

\icmlkeywords{Online Learning, Ranking, Learning with Partial Labels, Perceptron}

\vskip 0.3in
]

\begin{abstract} 
In this paper, we propose an online learning algorithm PRIL for learning ranking classifiers using interval labeled data and show its correctness. We show its convergence in finite number of steps if there exists an ideal classifier such that the rank given by it for an example always lies in its label interval. We then generalize this mistake bound result for the general case. We also provide regret bound for the proposed algorithm. We propose a multiplicative update algorithm for PRIL called M-PRIL. We provide its correctness and convergence results. We show the effectiveness of PRIL by showing its performance on various datasets.
\end{abstract} 

\section{Introduction}
Ranking (also called as ordinal classification) is an important problem in machine learning. Ranking is different from multi-class classification problem in the sense that there is an ordering among the class labels. For example, product ratings provided on online retail stores based on customer reviews, product quality, price and many other factors. Usually these ratings are numbered between 1-5. While these numbers can be thought of as class labels, there is also an ordering which has to be taken care. This problem has been very well addressed in the machine learning and referred as ordinal classification or ranking.

In general, an ordinal classifier can be completely defined by a linear function and a set of $K-1$ thresholds ($K$ be the number of classes). Each threshold corresponds to a class. Thus, the thresholds should have the same order as their corresponding classes. The classifier decides the rank (class) based on the relative position of the linear function value with respect to different thresholds. One can learn a non-linear classifier also by using an appropriate nonlinear transformation. A lot of discriminative approaches based on risk minimization principal for learning ordinal classifier has been proposed. Variants of large margin frameworks for learning ordinal classifiers are proposed in \citet{Shashua:2002,Chu:2005}. One can maintain the order of thresholds implicitly or explicitly. In the explicit way the ordering is posed as a constraint in the optimization problems itself. While the implicit method captures the ordering by posing separability conditions between every pair of classes. \citet{Li:2006} propose a generic method which converts learning an ordinal classifier into learning a binary classifier with weighted examples. A classical online algorithm for learning linear classifiers is proposed in \citet{rosenblatt}. \citet{Crammer:2001} extended Perceptron learning algorithm for ordinal classifiers. 

In the approaches discussed so far, the training data has correct class label for each feature vector. However, in many cases we may not know the exact label. Instead, we may have an interval in which the true label lies. Such a scenario is discussed in \citet{antoniuk14,Antoniuk:2016}. In this setting, corresponding to each example, an interval label is provided and it is assumed that the true label of the example lies in this interval. In \citet{Antoniuk:2016}, a large margin framework for batch learning is proposed using interval insensitive loss function. 

In this paper, we propose an online algorithm for learning ordinal classifier using interval labeled data. We name the proposed approach as {\bf PRIL} (Perceptron ranking using interval labeled data). Our approach is based on interval insensitive loss function. As per our knowledge, this is the first ever online ranking algorithm using interval labeled data. We show the correctness of the algorithm by showing that after each iteration, the algorithm maintains the orderings of the thresholds. We derive the mistake bounds for the proposed algorithm in both ideal and general setting. In the ideal setting, we show that the algorithm stops after making finite number of mistakes. We also derive the regret bound for the algorithm. We also propose a multiplicative update algorithm for PRIL (called M-PRIL). We also show the correctness of M-PRIL and find its mistake bound. 

The rest of the paper is organized as follows. In section~\ref{sec:proposed-method}, we describe the problem of learning ordinal classifier using interval labeled data. In section~\ref{sec:pril}, we discuss the proposed online algorithm for learning ordinal classifier using interval labeled data. We derive the mistake bounds and the regret bound in section~\ref{sec:analysis}. We present the experimental results in section~\ref{sec:exp}. We make the conclusions and some remarks on the future work in section~\ref{sec:conclusions}.

\section{Ordinal Classification using Interval Labeled Data}
\label{sec:proposed-method}
Let $\X \subseteq \R^d$ be the instance space. Let $\Y=\{1,\ldots,K\}$ be the label space. Our objective is to learn an ordinal classifier $h:\X\rightarrow \Y$ which has the following form 
\begin{align*}
h(\xx) & = 1+\sum_{k=1}^{K-1} \I_{\{\ww.\xx>\theta_k\}}= \min_{i\in [K-1]} \{i: \ww.\xx - b_i \leq 0\}
\end{align*}
where $\ww \in \R^d$ and $\thetaa \in \R^{K-1}$ be the parameters to be optimized. Parameters $\thetaa = [\theta_1 \;\ldots\;\theta_{K-1}]$ should be such that $\theta_1 \leq \theta_2 \leq \;\ldots \;\leq \theta_{K-1}$. The classifier splits the real line into $K$ consecutive intervals using thresholds $\theta_1,\ldots,\theta_{K-1}$ and then decides the class label based on which interval corresponds to $\ww.\xx$.

Here, we assume that for each example $\xx$, the annotator provides an interval $[y_l,y_r] \in \Y\times \Y$ ($y_l \leq y_r$). The interval annotation means that the true label $y$ for example $\xx$ lies in the interval $[y_l,y_r]$. Let $S=\{(\xx^1,y_l^1,y_r^1),\ldots,(\xx^T,y_l^T,y_r^T)\}$ be the training set. 

Discrepancy between the predicted label and corresponding label interval can be measured using {\em interval insensitive loss} \citep{Antoniuk:2016}. 
\begin{equation}
\label{eq:MAE_Loss}
L_I^{MAE} (f(\xx),\thetaa,y_l,y_r) = \sum_{i=1}^{y_l-1} \I_{\{f(\xx) < \theta_i\}} + \sum_{i=y_r}^{K-1} \I_{\{f(\xx) \geq \theta_i\}}
\end{equation}
Where subscript $I$ stands for {\em interval}. This, loss function takes value $0$, whenever $\theta_{y_l-1}\leq f(\xx)\leq \theta_{y_r}$. However, this loss function is discontinuous. A
convex surrogate of this loss function is as follows \citep{Antoniuk:2016}:
\begin{align}
\nonumber L_I^{IMC}(f(\xx),y_l,y_r,\thetaa) &= \sum_{i=1}^{y_l-1} \max\left(0,-f(\xx)+\theta_i\right)\\
& +\sum_{i=y_r}^{K-1}\max\left(0,f(\xx)-\theta_i\right)\label{surrogate_loss}
\end{align}
Here $IMC$ stands for the implicit constraints for ordering of thresholds $\theta_i$s. For a given example-interval pair $\{\xx,(y_l,y_r)\}$, the loss $L_I^{IMC}(f(\xx),\thetaa,y_l,y_r)$ becomes  zero only when
\begin{eqnarray*}
f(\xx)-\theta_i \geq 0 & \forall i \in \{1,\ldots, y_l-1\}\\
f(\xx)-\theta_i \leq 0 & \forall i \in \{y_r,\ldots, K-1\}
\end{eqnarray*}
Note that if for any $i\in \{1,\ldots,y_l^t-1\}$, $f(\xx) < \theta_i$, then $f(\xx)<\theta_j,\;\forall j=y_r^t,\ldots,K-1$ because $\theta_1\leq \theta_2 \leq \ldots \leq \theta_{K-1}$. Similarly, if for any $i\in \{y_r^t,\ldots,K-1\}$, $f(\xx) > \theta_i$, then $f(\xx)>\theta_j,\;\forall j=1,\ldots,y_l^t-1$. Let $\bI=\{1,\ldots,y_l-1\}\cup \{y_r,\ldots,K-1\}$. Then, we define $z_i, \;\forall i\in \bI$ as follows. 
\begin{eqnarray}
\label{eq:dummy_labels}
z_i = \begin{cases}
+1 & \forall i \in \{1,\ldots, y_l-1\}\\
-1 & \forall i \in \{y_r,\ldots, K-1\}
\end{cases}
\end{eqnarray}
Thus, $L_I^{IMC}(f(\xx),y_l,y_r,\thetaa) =0$ requires that $z_i(f(\xx)-\theta_i)\geq 0,\;\forall i \in \bI$. Thus, $L_I^{IMC}$ can be re-written as:
\begin{eqnarray}
\nonumber L_I^{IMC}(f(\xx),y_l,y_r,\thetaa) = \sum_{i\in \bI} \max\left( 0,-z_i\left(f(\xx)-\theta_i\right)\right) 
\end{eqnarray}

\section{Perceptron Ranking using Interval Labeled Data}
\label{sec:pril}
In this section, we propose an online algorithm for ranking using {\em interval insensitive loss} described in eq.~(\ref{surrogate_loss}). Our algorithm is based on stochastic gradient descent on $L_I^{IMC}$. 

We derive the algorithm for linear classifier. Which means, $f(\xx)=\ww.\xx$. Thus, the parameters to be estimated are $\ww$ and $\thetaa$. We initialize with $\ww^0=\mathbf{0}$ and $\thetaa^0=\mathbf{0}$. Let $\ww^t,\thetaa^t$ be the estimates of the parameters in the beginning of trial $t$. Let at trial $t$, $\xx^t$ be the example observed and $[y_l^t,y_r^t]$ be its label interval.
$\ww^{t+1}$ and $\thetaa^{t+1}$ are found as follows.
\begin{align*}
\ww^{t+1} &= \ww^t - \eta\nabla_{\ww} L_I^{IMC}(\ww.\xx^t,y_l^t,y_r^t,\thetaa)\big{|}_{\ww^t,\thetaa^t}\\
&=\ww^t + \sum_{i\in \bI^t}z_i^t\xx^t\I_{\{z_i^t(\ww^t.\xx^t - \theta_i^t)<0\}} \\
\theta^{t+1}_i &= \theta^t_i - \eta\frac{\partial  L_I^{IMC}(\ww.\xx^t,y_l^t,y_r^t,\thetaa)}{\partial \theta_i}\big{|}_{\ww^t,\thetaa^t}\\
&= \begin{cases}\theta_i^t - z_i^t\I_{\{z_i^t(\ww^t.\xx^t - \theta_i^t)<0\}} & \forall i \in \bI^t\\
\theta_i^t & \forall i \notin \bI^t
\end{cases}
\end{align*}
Thus, only those constraints will participate in the update which are not satisfied. The violation of $i$th constraint leads to the update contribution of $z_i^t\xx^t$ in $\ww^{t+1}$ and $z_i^t$ in $\theta_i^{t+1}$. $\theta_i^t,\;t\notin \bI^t$ are not updated in trial $t$. The complete approach is described in Algorithm~\ref{algo2}.
\begin{algorithm}[h]
\caption{{\bf P}erceptron {\bf R}anking using  {\bf I}nterval {\bf L}abeled Data (PRIL)}
\label{algo2}
\begin{algorithmic}
\STATE {\bf Input: } Training Dataset $\mathcal{S}$\;
\STATE {\bf Initialize} Set $t=1$, $\ww_1=\zero$, $\theta_1^{1}=\theta_2^{1}=\ldots=\theta_{K-1}^1=0$, $m=1$\;
\FOR{$i\leftarrow 1$ to $T$}
\STATE Get example $\xx_t$ and its $(y_l^t,y_r^t)$
\FOR{$i\leftarrow 1$ to $y_l^t-1$}
\STATE $z_i^{t}=+1$
\ENDFOR
\FOR{$i\leftarrow y_r^t$ to $K-1$}
\STATE $z_i^{t}=-1$
\ENDFOR
\STATE Initialize $\tau_i^t=0,\;i\in [K-1]$
\FOR{$i\in \bI_t$}
\IF{$z_i^t(\ww^t.\xx^t -\theta_i^t)\leq 0$}
\STATE $\tau_i^t = z_i^t$
\ENDIF
\ENDFOR
\STATE $\ww^{t+1} = \ww^t + \sum_{i=1}^{K-1}\tau_i^t \xx^t$\;
\STATE $\theta_i^{t+1}=\theta_i^t -\tau_i^t,\; i=1\ldots K-1$\;
\ENDFOR
\STATE {\bf Output}: $h(\xx)=\min_{i\in [K]}\big{\{}i\;:\;\ww^{T+1}.\xx-\theta_i^{T+1} <0 \big{\}}$
\end{algorithmic}
\end{algorithm}
It is important to see that when exact labels are given to the Algorithm~\ref{algo2} instead of partial labels, it becomes same as the algorithm proposed in \cite{Crammer:2001}. PRIL can be easily extended for learning nonlinear classifiers using kernel methods. 

\subsection{Kernel PRIL}
We can easily extend the proposed algorithm PRIL for learning nonlinear classifiers using kernel functions. We see that the classifier learnt after $t$ trials using PRIL can be completely determined using $\tau_i^s,\;i\in \{1,\ldots,K-1\},s\in [t]$ as follows
$\ww^{t+1} = \sum_{s=1}^t\sum_{\bI^s}\tau_i^s\xx^s$ and $\theta^{t+1}_i = -\sum_{s=1}^t \tau_i^s,\;i=1\ldots K-1$.
Also, $f^{t+1}(\xx)$ can be found as 
$f^{t+1}(\xx) = \sum_{s=1}^t\sum_{i\in \bI^s}\tau_i^s \xx^s.\xx$.
Thus, we can replace the inner product with a suitable kernel function $\kappa:\X\times\X\longrightarrow \R$ and
represent $f^{t+1}(\xx)$ as
$$f^{t+1}(\xx) = \sum_{s=1}^t\sum_{i\in \bI^s}\tau_i^s \kappa(\xx^s,\xx)$$
Similarly, $\theta_i^{t+1}$ can be expressed as $\theta_i^{t+1}=\sum_{s=1}^t\tau_i^t,\;\forall i \in \bI^t$. The ordinal classifier learnt after $T$ trials is
$$h(\xx) = \min_{i\in [K]} \left\{i\;:\;\sum_{s=1}^T\left(\sum_{j \in \bI^s} \tau_i^s\kappa(\xx^s,\xx)\right) + \tau_i^s <0\right\}$$
Complete description of kernel PRIL is provided in Algorithm~\ref{algo3}.

\begin{algorithm}[h]
\caption{Kernel PRIL}
\label{algo3}
\begin{algorithmic}
\STATE {\bf Input: }Training Dataset $\mathcal{S}$\;
\STATE {\bf Output: }$\tau_1^t,\ldots,\tau_{K-1}^t,\;t=1\ldots T$\;
\STATE {\bf Initialize: } Set $\tau_1^0=\ldots=\tau_{K-1}^0=0$, $f^0(.)=0$ and $t=0$\;
\FOR{$t\leftarrow 1$ to $N$}
\STATE Get example $\xx^t$ and its $(y_l^t,y_r^t)$;
\FOR{$i\leftarrow 1$ to $y_l^t-1$}
\STATE $z_i^{t}=+1$;
\ENDFOR
\FOR{$i\leftarrow y_r^t$ to $K-1$}
\STATE $z_i^{t}=-1$;
\ENDFOR
\STATE Initialize $\tau_i^t=0,\;\forall i \in [K]$;
\FOR{$i\in \bI_t$}
\IF{$z_i^t\big{(}f^t(\xx^t) -\theta_i^t\big{)}\leq 0$}
\STATE $\tau_i^t = z_i^t$;
\ENDIF
\ENDFOR
\STATE $f^{t+1}(.) = f^t(.) + \sum_{i \in \bI^t}\tau_i^t \kappa(\xx^t,.)$
\STATE $\theta_i^{t+1} = \theta_i^t - \tau_i^t,\;\forall i \in [K]$
\ENDFOR
\STATE $h(\xx) = \min_{i \in [K]} \{i\;:\; f^{T+1}(\xx) -\theta_i^t<0\}$
\end{algorithmic}
\end{algorithm}

\subsection{Analysis}
\label{sec:analysis}
Now we will show that PRIL preserves the ordering of the thresholds $\theta_1,\ldots,\theta_{K-1}$.

\begin{lemma}{\bf Order Preservation: }
\label{lemma:pril1}
Let $\ww^t$ and $\thetaa^t=[\theta_1^t\ldots\theta_{K-1}^t]^T$ be the current parameters for ranking classifier, where $\theta_1^t \leq \theta_2^t\leq \ldots\leq\theta_{K-1}^t$. Let $\xx^t$ be the instance fed to PRIL at trial $t$ and $[y_l^t,y_r^t]$ be its corresponding rank interval. Let $\ww^{t+1}$ and $\thetaa^{t+1}=[\theta_1^{t+1}\ldots\theta_{K-1}^{t+1}]^T$ be the resulting ranking classifier parameters after the update of PRIL. Then, $\theta_1^{t+1} \leq \theta_2^{t+1}\leq \ldots\leq\theta_{K-1}^{t+1}$.       
\end{lemma}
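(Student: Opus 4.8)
The plan is to track how the update $\theta_i^{t+1}=\theta_i^t-\tau_i^t$ acts on each threshold according to whether the index $i$ falls in the \emph{left block} $\{1,\ldots,y_l^t-1\}$, the \emph{middle block} $\{y_l^t,\ldots,y_r^t-1\}$, or the \emph{right block} $\{y_r^t,\ldots,K-1\}$, and then to verify $\theta_i^{t+1}\le\theta_{i+1}^{t+1}$ by a short case analysis over the positions of $i$ and $i+1$ relative to $y_l^t$ and $y_r^t$.

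First I would record two structural facts. (i) \textbf{Integrality}: since $\thetaa^1=\zero$ and each update changes a coordinate by $0$ or $\pm1$, an easy induction gives $\theta_i^t\in\mathbb{Z}$ for all $i,t$, so $\theta_i^t<\theta_{i+1}^t$ already forces $\theta_i^t\le\theta_{i+1}^t-1$. (ii) \textbf{Direction of motion}: for $i<y_l^t$ one has $z_i^t=+1$, hence $\tau_i^t\in\{0,+1\}$ and $\theta_i^{t+1}\le\theta_i^t$ (left thresholds only move down); for $y_l^t\le i<y_r^t$ one has $i\notin\bI^t$ and $\theta_i^{t+1}=\theta_i^t$ (frozen); for $i\ge y_r^t$ one has $z_i^t=-1$, hence $\tau_i^t\in\{0,-1\}$ and $\theta_i^{t+1}\ge\theta_i^t$ (right thresholds only move up). I would also note the \emph{monotone trigger} inside a block: in the left block $\theta_i$ is decremented iff $\ww^t\cdot\xx^t\le\theta_i^t$, a condition monotone in $\theta_i^t$, so if $\theta_i^t\le\theta_{i+1}^t$ and $\theta_i$ is decremented then $\theta_{i+1}$ is decremented too; symmetrically, in the right block $\theta_j$ is incremented iff $\ww^t\cdot\xx^t\ge\theta_j^t$, so if $\theta_{i+1}$ is incremented then $\theta_i$ is incremented too.

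With these in hand I would fix $i\in\{1,\ldots,K-2\}$ and split cases. If $i$ and $i+1$ straddle a block boundary (including the case $y_l^t=y_r^t$, where the middle block is empty), the thresholds move in compatible directions: the more-left one is frozen or moves down and the more-right one is frozen or moves up, so $\theta_i^{t+1}\le\theta_i^t\le\theta_{i+1}^t\le\theta_{i+1}^{t+1}$ is immediate from (ii) and the induction hypothesis. If both indices lie in the middle block, both thresholds are unchanged. The substantive case is $i,i+1$ in the same block, say the left one: if both or neither is decremented the claim follows from $\theta_i^t\le\theta_{i+1}^t$, and the monotone trigger rules out ``$\theta_i$ decremented, $\theta_{i+1}$ not''; the remaining mixed situation ``$\theta_{i+1}$ decremented, $\theta_i$ not'' forces $\theta_i^t<\ww^t\cdot\xx^t\le\theta_{i+1}^t$, hence $\theta_i^t<\theta_{i+1}^t$, and integrality gives $\theta_i^{t+1}=\theta_i^t\le\theta_{i+1}^t-1=\theta_{i+1}^{t+1}$. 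The right block is symmetric.

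I expect the only delicate point to be exactly this last mixed case, where one must use that the thresholds are integers — equivalently, that the update magnitude is precisely $1$ — to absorb the unit drop (or rise) of the neighbouring threshold; a general real step size would break the argument. Everything else is bookkeeping over the three blocks, which I would set up once via the sign pattern of $z_i^t$.
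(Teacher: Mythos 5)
Your proposal is correct and follows essentially the same route as the paper's proof: a case analysis on the block membership of $i$ and $i+1$, using the integrality of the thresholds (step size exactly $1$) together with the monotonicity of the update trigger, which is precisely how the paper bounds the indicator differences by $\{-1,0\}$. Your treatment is a slightly cleaner packaging of the same argument, and you correctly flag the one genuinely delicate point (the mixed case needing integrality) as well as the empty-middle-block case $y_l^t=y_r^t$ that the paper glosses over.
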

\begin{proof}
Note that $\theta_t^i\in\mathbb{Z},\forall i \in \{1,\ldots, K-1\},\forall t\in \{1,\ldots, N\}$ as PRIL initializes $\theta_i^1=0,\forall i\in \{1,\ldots,K-1\}$.
To show that PRIL preserves the ordering of the thresholds, we consider following different cases.
\begin{enumerate}
\item $i\in\{1,\ldots,y_l^t-2\}$: we see that,
 \begin{align*}
 \theta_{i+1}^{t+1}& - \theta_i^{t+1} 
 =  \theta_{i+1}^t - \theta_i^t - z_{i+1}^t\I_{\{z_{i+1}^t(\ww^t.\xx^t-\theta_{i+1}^t)\leq 0\}}\\
 &+ z_i^t\I_{\{z_i^t(\ww^t.\xx^t-\theta_i^t)\leq 0\}}\\
 &= \theta_{i+1}^t - \theta_i^t +\I_{\{\ww^t.\xx^t-\theta_i^t\leq 0\}}-\I_{\{\ww^t.\xx^t-\theta_{i+1}^t\leq 0\}}
 \end{align*} 
We used the fact that $z_i^t = +1,\forall i \in \{1,\ldots,y_l^t-1\}$. Thus, there can be two cases only.
 \begin{enumerate}
 \item $\theta_{i+1}^t = \theta_i^t $: In this case, we simply get $\theta_{i+1}^{t+1} = \theta_i^{t+1}$.
 \item $\theta_{i+1}^t > \theta_i^t $: Since $\theta_i^t\in\mathbb{Z}\;\forall i,\forall t$, we get $\theta_{i+1}^t \geq  \theta_i^t +1 $. This means
 \begin{eqnarray*}
  \theta_{i+1}^{t+1} - \theta_i^{t+1} \geq 1+\I_{\{\ww^t.\xx^t-\theta_i^t\leq 0\}}-\I_{\{\ww^t.\xx^t-\theta_{i+1}^t\leq 0\}}
 \end{eqnarray*}
 But, $\I_{\{\ww^t.\xx^t-\theta_i^t\leq 0\}}-\I_{\{\ww^t.\xx^t-\theta_{i+1}^t\leq 0\}} \in\{-1,0\}$. Thus,
 $\theta_{i+1}^{t+1} - \theta_i^{t+1} \geq 0$.
 \end{enumerate}  
 \item $i=y_l^t-1$: In this case $\theta_{i+1}^{t+1}=\theta_{i+1}^t$ as per the update rule. Also, $z_i^t=+1$. Thus, using the fact that
 $\theta_{i+1}^t - \theta_i^t \geq 0$, we get:
 \begin{eqnarray*}
\theta_{i+1}^{t+1} - \theta_i^{t+1} &=&  \theta_{i+1}^t - \theta_i^t + z_i^t\I_{\{z_i^t(\ww^t.\xx^t-\theta_i^t)\leq 0\}}\\
 &=& \theta_{i+1}^t - \theta_i^t +\I_{\{\ww^t.\xx^t-\theta_i^t\leq 0\}}\geq 0
 \end{eqnarray*}
 \item $i\in \{y_l^t,\ldots,y_r^t-2\}$: PRIL does not update thresholds in this range. Thus,
$\theta_i^{t+1} = \theta_i^t,\;\forall i\in \{y_l^t,\ldots,y_r^t-1\}$. Thus, $\theta_{i+1}^{t+1}=\theta_{i+1}^t\geq\theta_i^{t+1}=\theta_i^t,\;\forall i \in \{y_l^t,\ldots,y_r^t-2\}$.
 \item $i=y_r^t-1$: In this case $\theta_{i}^{t+1}=\theta_{i}^t$ as per the update rule. Also, $z_{i+1}^t=-1$. Thus, using the fact that
 $\theta_{i+1}^t - \theta_{i}^t \geq 0$, we get:
 \begin{align*}
\theta_{i+1}^{t+1} - \theta_{i}^{t+1} &=  \theta_{i+1}^t - \theta_{i}^t - z_{i+1}^t\I_{\{z_{i+1}^t(\ww^t.\xx^t-\theta_{i+1}^t)\leq 0\}}\\
 &= \theta_{i+1}^t - \theta_{i}^t +\I_{\{\ww^t.\xx^t-\theta_{i+1}^t\geq 0\}}\geq 0
 \end{align*}
 \item $i\in\{y_r,\ldots,K-1\}$: we see that,
 \begin{align*}
 \theta_{i+1}^{t+1} &- \theta_i^{t+1} =  \theta_{i+1}^t - \theta_i^t - z_{i+1}^t\I_{\{z_{i+1}^t(\ww^t.\xx^t-\theta_{i+1}^t)\leq 0\}}\\
 & + z_i^t\I_{\{z_i^t(\ww^t.\xx^t-\theta_i^t)\leq 0\}}\\
 &= \theta_{i+1}^t - \theta_i^t -\I_{\{\ww^t.\xx^t-\theta_i^t\geq 0\}}+\I_{\{\ww^t.\xx^t-\theta_{i+1}^t\geq 0\}}
 \end{align*}
We used the fact that $z_i^t = -1,\forall i \in \{y_r^t,\ldots,K-1\}$. Thus, there can be two cases only.
 \begin{enumerate}
 \item $\theta_{i+1}^t = \theta_i^t $: In this case, we simply get $\theta_{i+1}^{t+1} = \theta_i^{t+1}$.
 \item $\theta_{i+1}^t > \theta_i^t $: Since $\theta_i^t\in\mathbb{Z}\;\forall i,\forall t$, we get $\theta_{i+1}^t \geq  \theta_i^t +1 $. This means $
  \theta_{i+1}^{t+1} - \theta_i^{t+1} \geq 1+\I_{\{\ww^t.\xx^t-\theta_{i+1}^t\geq 0\}}-\I_{\{\ww^t.\xx^t-\theta_i^t\geq 0\}}$.
 But, $\I_{\{\ww^t.\xx^t-\theta_{i+1}^t\geq 0\}}-\I_{\{\ww^t.\xx^t-\theta_i^t\geq 0\}} \in\{0, -1\}$. Thus,
 $\theta_{i+1}^{t+1} - \theta_i^{t+1} \geq 0$.
 \end{enumerate}  
\end{enumerate}
\end{proof}

Now we will show that the PRIL makes finite number of mistakes if there exists an ideal interval ranking classifier. 
\begin{theorem}\label{thm1}
Let $S=\{(\xx^1,y_l^1,y_r^1),\ldots,(\xx^T,y_l^T,y_r^T)\}$ be an input sequence. Let $R_2^2 = \max_{t\in [T]}\;||\xx^t||^2$ and $c=\min_{t\in [T]}(y_r^t-y_l^t)$. Let
$\exists \gamma >0$, $\ww^*\in\R^d$ and $\theta_1^*,\ldots,\theta_{K-1}^* \in \R$ such that $||\ww^*||^2_2 + \sum_{i=1}^{K-1}(\theta_i^*)^2 = 1$ and $\min_{i\in \bI^t} z_i^t(\ww^*.\xx^t-\theta_i^*)\geq \gamma,\;\forall t\in [T]$.
 Then,
 \begin{equation*}
 \sum_{t=1}^TL_\text{I}^{\text{MAE}}(f^t(\xx^t),\thetaa,y_l^t,y_r^t) \leq \frac{(R^2_2+1)(K-c-1)}{\gamma^2}
 \end{equation*}
 where $f^t(\xx^t) = \ww^t.\xx^t$.
\end{theorem}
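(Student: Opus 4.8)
The plan is to follow the classic Perceptron-style potential argument, tracking the combined weight vector $\mathbf{v}^t = [\ww^t; \thetaa^t] \in \R^{d+K-1}$ and its inner product with the normalized reference $\mathbf{v}^* = [\ww^*; \thetaa^*]$. Let $\mathbf{v}^* = [\ww^*, -\theta_1^*, \ldots, -\theta_{K-1}^*]$ (note the sign flip to match the thresholds' role) so that $z_i^t(\ww^* . \xx^t - \theta_i^*)$ appears naturally when we dot the update with $\mathbf{v}^*$. At trial $t$, writing $\sM_t = \{i \in \bI^t : z_i^t(\ww^t.\xx^t - \theta_i^t) \leq 0\}$ for the set of violated constraints, the update is $\ww^{t+1} = \ww^t + \sum_{i \in \sM_t} z_i^t \xx^t$ and $\theta_i^{t+1} = \theta_i^t - z_i^t$ for $i \in \sM_t$, so the augmented update is $\mathbf{v}^{t+1} = \mathbf{v}^t + \sum_{i \in \sM_t} z_i^t [\xx^t; \ee_i]$ where $\ee_i$ is the $i$-th coordinate vector in the threshold block (with the appropriate sign bookkeeping).

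**First**, I would lower-bound $\mathbf{v}^{t+1} . \mathbf{v}^*$. Each violated constraint contributes $z_i^t(\ww^*.\xx^t - \theta_i^*) \geq \gamma$ by the margin hypothesis, so $\mathbf{v}^{t+1}.\mathbf{v}^* \geq \mathbf{v}^t.\mathbf{v}^* + |\sM_t|\,\gamma$. Summing over all $T$ trials and using $\mathbf{v}^1 = \zero$ gives $\mathbf{v}^{T+1}.\mathbf{v}^* \geq \gamma \sum_t |\sM_t|$. Since $\|\mathbf{v}^*\| = 1$, Cauchy–Schwarz yields $\|\mathbf{v}^{T+1}\| \geq \gamma \sum_t |\sM_t|$.

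**Second**, I would upper-bound $\|\mathbf{v}^{t+1}\|^2$. Expanding, $\|\mathbf{v}^{t+1}\|^2 = \|\mathbf{v}^t\|^2 + 2 \mathbf{v}^t . \Delta^t + \|\Delta^t\|^2$ where $\Delta^t = \sum_{i\in\sM_t} z_i^t[\xx^t;\ee_i]$. The cross term $2\mathbf{v}^t.\Delta^t = 2\sum_{i\in\sM_t} z_i^t(\ww^t.\xx^t - \theta_i^t) \leq 0$ because every $i \in \sM_t$ is by definition a violated constraint. For the quadratic term, $\|\Delta^t\|^2 = |\sM_t|^2 \|\xx^t\|^2 + |\sM_t| \leq |\sM_t|^2 R_2^2 + |\sM_t|$ — but this naive bound is too weak, and getting a clean bound here is the main obstacle. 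The trick (as in Crammer–Singer's PRank analysis) is that the $z_i^t$ for $i \in \sM_t$ all have the same sign within each of the two blocks $\{1,\ldots,y_l^t-1\}$ and $\{y_r^t,\ldots,K-1\}$, but across the two blocks they differ; one must argue that the contributions to $\|\Delta^t\|^2$ telescope or that at most $K-1-c$ constraints can ever be simultaneously violated — indeed $|\bI^t| = (y_l^t - 1) + (K - y_r^t) = K - 1 - (y_r^t - y_l^t) \leq K - 1 - c$, so $|\sM_t| \leq K-1-c$. The cleanest route is to bound $\|\Delta^t\|^2 \leq |\sM_t|(R_2^2 + 1)$ by exploiting that the $\xx^t$ contributions from opposite-sign $z_i^t$ partially cancel; more carefully, if $p$ constraints are violated with $z=+1$ and $q$ with $z=-1$ then the $\ww$-part of $\Delta^t$ is $(p-q)\xx^t$ with $|p-q| \leq \max(p,q) \leq p+q = |\sM_t|$, and one pushes through $\|\Delta^t\|^2 \leq |\sM_t|^2 R_2^2 + |\sM_t|$, then absorbs the $|\sM_t|^2$ using $|\sM_t| \leq K-1-c$.

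**Finally**, combining $\gamma^2 (\sum_t |\sM_t|)^2 \leq \|\mathbf{v}^{T+1}\|^2 \leq (R_2^2+1)\sum_t |\sM_t| \cdot (K-c-1)$ — where I have used $|\sM_t| \leq K-c-1$ to convert one factor of $\sum |\sM_t|^2 \le (K-c-1)\sum|\sM_t|$ — and dividing by $\sum_t |\sM_t|$ gives $\sum_t |\sM_t| \leq (R_2^2+1)(K-c-1)/\gamma^2$. The last step is to observe that $L_I^{MAE}(f^t(\xx^t), \thetaa^t, y_l^t, y_r^t) = |\sM_t|$, i.e. the MAE interval loss exactly counts the number of violated threshold constraints at trial $t$ (this follows from the definition in~\eqref{eq:MAE_Loss} together with the order preservation established in Lemma~\ref{lemma:pril1}, which guarantees the indicator sums behave monotonically), yielding the claimed bound. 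The main obstacle, as noted, is handling the quadratic term $\|\Delta^t\|^2$ carefully enough that the final constant comes out as $(R_2^2+1)(K-c-1)$ rather than something with an extra factor of $K$; this requires the observation that $|\sM_t| \le |\bI^t| \le K-1-c$ at every trial.
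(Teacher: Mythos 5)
Your proposal is correct and follows essentially the same route as the paper's own proof: the augmented vector $\vv^t=(\ww^t,\thetaa^t)$, the margin lower bound $\vv^{T+1}\cdot\vv^*\geq\gamma\sum_t m^t$, the norm expansion with nonpositive cross term giving $\|\vv^{T+1}\|^2\leq R_2^2\sum_t(m^t)^2+\sum_t m^t$, Cauchy--Schwarz, and the key absorption $m^t\leq|\bI^t|\leq K-c-1$ to reach $(R_2^2+1)(K-c-1)/\gamma^2$. Your side remarks about tighter cancellation in $\|\Delta^t\|^2$ are unnecessary (and unused), since you ultimately land on exactly the paper's bound and combination.
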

\begin{proof}
Let $\vv^*=[{\ww^*}^T\;\;\theta_i^*\;\;\theta_2^*\;\;\ldots\;\;\theta_{K-1}^*]^T$ and $\vv^t = [(\ww^t)^T\;\;\theta_i^t\;\;\theta_2^t\;\;\ldots\;\;\theta_{K-1}^t]^T$.
Let Algorithm~\ref{algo2} makes a mistake at trial $t$. Let $\sM^t=\{i\in \bI^t\;|\; z_i^t(\ww^t.\xx^t-\theta_i^t)\leq 0 \}$. $\sM^t$ be the set of indices of the
constraints which are not satisfied at trial $t$. Let $m^t=|\sM^t|$ be the number of those constraints. Thus,
\begin{align*}
\vv^{t+1}.\vv^* 
&=(\ww^t+\sum_{i=1}^{K-1}\tau_i^t\xx^t).\ww^* + \sum_{i=1}^{K-1}(\theta_i^t-\tau_i^t)\theta_i^*\\
&= \vv^t.\vv^* + \sum_{i=1}^{K-1}\tau_i^t(\ww^*.\xx^t-\theta_i^*)\\
&= \vv^t.\vv^* + \sum_{i \in \sM^t}z_i^t(\ww^*.\xx^t-\theta_i^*)\\
&\geq \vv^t.\vv^* + \sum_{i \in \sM^t} \gamma = \vv^t.\vv^* + \gamma m^t
\end{align*}
where we have used the fact that $z_i^t(\ww^*.\xx^t-\theta_i^*)\geq \gamma$. Summing over both sides from $t=1$ to $T$ and using $\vv^1=\mathbf{0}$, we get:
\begin{equation}
\label{eq:lb}
\vv^{T+1}.\vv^* \geq \vv^1.\vv^*+ \gamma \sum_{t=1}^T m^t = \gamma\sum_{t=1}^T m^t
\end{equation}
Now, we will upper bound $||\vv^{T+1}||^2_2$. 
\begin{align*}
||\vv^{t+1}||^2_2 -||\vv^t||^2_2 &=
2\sum_{i=1}^{K-1}\tau_i^t \ww^t.\xx^t +(\sum_{i=1}^{K-1} \tau_i^t)^2||\xx^t||^2_2 \\
&\;\;\;\;-2\sum_{i=1}^{K-1}\tau_i^t\theta_i^t + \sum_{i=1}^{K-1}(\tau_i^t)^2\\
&= ||\vv^t||^2_2 + 2\sum_{i\in \sM^t}\tau_i^t (\ww^t.\xx^t-\theta_i^t) \\
&\;\;\;\;+(\sum_{i\in\sM^t} \tau_i^t)^2||\xx^t||^2_2+ \sum_{i\in \sM^t}(\tau_i^t)^2 
\end{align*}
We know that $||\xx^t||^2_2\leq R_2^2 \;\forall t;\;\; \sum_{i\in \sM^t}(\tau_i^t)^2 =m^t;\;\;(\sum_{i\in \sM^t}\tau_i^t)^2 \leq (m^t)^2$
and $z_i^t(\ww^t.\xx^t -\theta_i^t) \leq 0\;\forall i \in \sM^t$. Thus, 
$||\vv^{t+1}||^2_2 - ||\vv^t||^2_2 \leq(m^t)^2R^2_2 + m^t$.
Summing over both sides from $t=1$ to $T$ and using $\vv^1 = \mathbf{0}$, we get,
\begin{equation}
\label{eq:ub}
||\vv^{T+1}||^2_2 \leq  R_2^2\sum_{t=1}^T(m^t)^2 + \sum_{t=1}^T m^t 
\end{equation}
Now, using Cauchy-Schwartz inequality, we get $\vv^{T+1}.\vv^* \leq ||\vv^{T+1}||_2.||\vv^*||_2=||\vv^{T+1}||_2$.
Now using eq.~(\ref{eq:lb}) and (\ref{eq:ub}), we get
\begin{align*}
& \gamma^2 (\sum_{t=1}^Tm^t)^2 \leq  ||\vv^{T+1}||_2^2 \leq R_2^2\sum_{t=1}^T(m^t)^2 + \sum_{t=1}^Tm^t\\
& \Rightarrow  \sum_{t=1}^Tm^t \leq \frac{1}{\gamma^2} \left( R_2^2 \frac{\sum_{t=1}^T(m^t)^2}{\sum_{t=1}^T m^t} + 1\right)
\end{align*}
But, $m^t \leq K - (y_r^t-y_l^t)-1 \leq K-c-1$, then $\sum_{t=1}^T(m^t)^2 \leq \sum_{i=1}^Tm^t(K-c-1)$. Which gives
\begin{eqnarray*}
\sum_{t=1}^Tm^t  \leq  \frac{R_2^2 (K-c-1) +1}{\gamma^2}\leq \frac{(R^2_2 +1)(K-c-1)}{\gamma^2}
\end{eqnarray*}
But $m^t
=L_I^{MAE}(f^t(\xx^t),\thetaa^t,y_l^t,y_r^t)$
Which means,
 \begin{equation*}
 \sum_{t=1}^TL_I^{MAE}(f^t(\xx^t),\thetaa,y_l^t,y_r^t) \leq \frac{(R^2_2+1)(K-c-1)}{\gamma^2}
 \end{equation*}
 \end{proof}
 
In Theorem~\ref{thm1}, we assumed that there exists an ideal classifier defined by $\ww^*\in \R^d$ and $\thetaa^*$. Let $\vv^*=[{\ww^*}'\;\;{\thetaa^*}']'$ and $f^*(\xx^t)=\ww^*.\xx^t$. Thus, 
$$L_I^{MAE}(f^*(\xx^t),\thetaa^*,y_l^t,y_r^t)=0,\;\forall t\in [T]$$
Which means, $z_i^t(\ww^*.\xx^t-\theta_i^*)\geq 0,\;\forall i \in \bI^t,\forall t\in [T]$ where $z_i^t,\;i\in\bI^t$ are as described in eq.~(\ref{eq:dummy_labels}). Now we define $\hat{\xx}^t_i \in \R^{d+K-1},\;\forall i \in \bI^t$ as follows.
\begin{equation}
\label{newx}
\hat{\xx}^t_i = [(\xx^t)'\;\;0\;\;\ldots\;\;1\;\;0\;\;\ldots\;\;0]',\;i\in \bI^t
\end{equation}
 where the component values at locations $d+1,\ldots,d+K-1$ are all set to '0' except for the location $(d+i)$.  Component value at location $(d+i)$ is set to '-1' in $\hat{\xx}^t_i$. Thus, we have $z_i^t (\vv^*.\hat{\xx}^t_i) \geq 0,\forall i \in \bI^t,\forall t\in [T]$. 
Thus, $\vv^*$ correctly classifies all the $\hat{\xx}^t_i,\;\forall i \in \bI^t,\;\forall t\in [T]$.
However, in general, for a given dataset, we may not know if such an ideal classifier exists. Next, we derive the mistake bound for this general setting. 

\begin{theorem}
 Let $S=\{(\xx^1,y_l^1,y_r^1),\ldots,(\xx^T,y_l^T,y_r^T)\}$ be an input sequence. Let $\gamma >0$, $R_2^2 = \max_{t\in [T]}\;||\xx^t||^2$ and $c = \min_{t\in [T]}(y_r^t-y_l^t)$.
Thus, for any $\ww \in\R^d$, $\thetaa\in \R^{K-1}$ such that $||\ww||^2_2 + ||\thetaa||_2^2 = 1$, we get
 \begin{equation*}
 \sum_{t=1}^TL_I^{MAE}(f^t(\xx^t),\thetaa,y_l^t,y_r^t) \leq  \frac{(D+\sqrt{R_2^2+1})^2(K-c-1)}{\gamma^2}
 \end{equation*}
 where $f^t(\xx^t) = \ww^t.\xx^t$, $D^2 = \sum_{t=1}^T\sum_{i\in \bI^t}(d_i^t)^2$ and $d_i^t=\max[0,\gamma-z_i^t(\ww.\xx^t-\theta_i)],\;i\in \bI^t,\;\forall t\in [T]$.
\end{theorem}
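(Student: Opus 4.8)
The plan is to reduce this non-separable bound to the separable bound of Theorem~\ref{thm1} by embedding the input sequence into a higher-dimensional space in which the slacks $d_i^t$ are absorbed into the instances, so that a rescaling of the comparator separates the embedded data with a positive margin. Fix an arbitrary $\ww\in\R^d$ and $\thetaa\in\R^{K-1}$ with $\|\ww\|_2^2+\|\thetaa\|_2^2=1$, set $N=\sum_{t=1}^T|\bI^t|$, and form the augmented instances $\hat{\xx}^t_i\in\R^{d+K-1}$ as in~(\ref{newx}). I would then extend each $\hat{\xx}^t_i$ to a vector $\bar{\xx}^t_i\in\R^{d+K-1+N}$ by appending $N$ new coordinates, one indexed by each pair $(s,j)$ with $s\in[T]$ and $j\in\bI^s$, all set to $0$ except the coordinate indexed by $(t,i)$ itself, which is set to a value $\Delta>0$ to be fixed later. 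Correspondingly extend $\vv=[\ww'\;\thetaa']'$ to $\bar{\vv}\in\R^{d+K-1+N}$ by placing $z_i^t d_i^t/\Delta$ in the coordinate indexed by $(t,i)$. Since coordinate $(t,i)$ of $\bar{\xx}^s_j$ is nonzero only when $(s,j)=(t,i)$, one gets $z_i^t(\bar{\vv}.\bar{\xx}^t_i)=z_i^t(\ww.\xx^t-\theta_i)+d_i^t\ge\gamma$ from $d_i^t=\max[0,\gamma-z_i^t(\ww.\xx^t-\theta_i)]$; moreover $\|\bar{\vv}\|_2^2=\|\ww\|_2^2+\|\thetaa\|_2^2+\Delta^{-2}\sum_t\sum_{i\in\bI^t}(d_i^t)^2=1+D^2/\Delta^2$.

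Next I would observe that PRIL run on the augmented sequence makes exactly the same mistakes as on the original one. The algorithm's update at trial $t$ touches the coordinate indexed by $(t,i)$ of the running weight vector only during trial $t$ itself, so at the start of trial $t$ this coordinate is still $0$; hence the augmented prediction $z_i^t(\bar{\vv}^t.\bar{\xx}^t_i)$ equals the original prediction $z_i^t(\ww^t.\xx^t-\theta_i^t)$ for every $i\in\bI^t$. Consequently the violated set $\sM^t$ at each trial, and therefore $\sum_{t=1}^T m^t=\sum_{t=1}^T L_I^{MAE}(f^t(\xx^t),\thetaa^t,y_l^t,y_r^t)$, is unchanged by the embedding.

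It then remains to rerun the two inequalities of Theorem~\ref{thm1} in the augmented space. The lower bound is unchanged and gives $\bar{\vv}^{T+1}.\bar{\vv}\ge\gamma\sum_t m^t$; Cauchy--Schwarz with $\|\bar{\vv}\|_2^2=1+D^2/\Delta^2$ gives $\bar{\vv}^{T+1}.\bar{\vv}\le\|\bar{\vv}^{T+1}\|_2\sqrt{1+D^2/\Delta^2}$. For the upper bound on $\|\bar{\vv}^{T+1}\|_2^2$ the only new contribution is $\Delta^2$ per violated constraint coming from the padding coordinates, so the per-trial increment is $(m^t)^2R_2^2+(1+\Delta^2)m^t$, and with $m^t\le K-c-1$ this yields $\|\bar{\vv}^{T+1}\|_2^2\le(R_2^2(K-c-1)+1+\Delta^2)\sum_t m^t$. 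Combining these and cancelling one factor of $\sum_t m^t$ gives $\gamma^2\sum_t m^t\le(R_2^2(K-c-1)+1+\Delta^2)(1+D^2/\Delta^2)$; minimising the right-hand side over $\Delta^2$, which is attained at $\Delta^2=D\sqrt{R_2^2(K-c-1)+1}$, collapses it to $(D+\sqrt{R_2^2(K-c-1)+1})^2$. Finally, assuming $K-c-1\ge1$ (otherwise every $\bI^t$ is empty and both sides vanish), the inequalities $R_2^2(K-c-1)+1\le(R_2^2+1)(K-c-1)$ and $\sqrt{K-c-1}\ge1$ turn this into $(D+\sqrt{R_2^2+1})^2(K-c-1)/\gamma^2$, which is the claim since $m^t=L_I^{MAE}(f^t(\xx^t),\thetaa^t,y_l^t,y_r^t)$.

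I expect the main obstacle to be the invariance step: one must check carefully that the $N$ padding coordinates never feed back into the predictions the algorithm reacts to — which is precisely why each one is activated only at its own trial — so that the mistake sequence, and hence $\sum_t L_I^{MAE}$, is literally the same quantity in the embedded and original runs. Everything else is a faithful reprise of the proof of Theorem~\ref{thm1} together with a routine one-variable optimisation over $\Delta$.
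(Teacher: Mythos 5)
Your proposal is correct and follows essentially the same route as the paper: augment each $\hat{\xx}^t_i$ with a trial-specific padding coordinate of magnitude $\Delta$, absorb the slacks $d_i^t$ into the comparator so that the embedded data is separated with margin, rerun the two inequalities of Theorem~\ref{thm1}, and optimise over $\Delta$. The only differences are presentational — you index the padding coordinates by the pairs $(t,i)$ with $i\in\bI^t$ instead of by a $T(K-1)$ block, you keep the comparator unnormalised and carry its norm through Cauchy--Schwarz rather than dividing by $Z$, and you make explicit the (correct) observation that the padding coordinates never affect the algorithm's predictions, a point the paper leaves implicit.
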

\begin{proof}
We proceed by constructing $\tilde{\xx}^t_i\in \mathbb{R}^{d+(T+1)(K-1)},\;\forall i\in\bI^t$ corresponding to every $\xx^t,\;t\in [T]$ as follows. $$\tilde{\xx}^t_i=[(\hat{\xx}^t_i)'\;0\;\ldots\;0\;\Delta\;0\;\ldots\;0]'$$ where $\hat{\xx}^t_i$ is as described in eq.(\ref{newx}).  The first $d+K-1$ components of 
$\tilde{\xx}^t_i$ are same as $\hat{\xx}^t_i$ and rest of all the elements are set to 0 except for the location $d+(K-1)t+ i$, which is set to $\Delta$. Let $\mathbf{u}^t\in \mathbb{R}^{K-1}$ be as follows.
$$\mathbf{u}^t=\frac{\left[ z_1^td^t_1\;\ldots \;z_{y_l^t-1}^t d^t_{y_l^t-1}\;\;0\;\ldots  \;0\;\;z_{y_r^t}^td^t_{y_r^t}\;\ldots \;z_{K-1}^td^t_{K-1}\right]}{\Delta Z}$$
where $d_i^t=\max\left[ 0,\gamma-z_i^t(\ww.\xx^t-\theta_i)\right],\;i\in \bI^t,\;\forall t\in [T]$.
We now construct $\tilde{\mathbf{v}}' \in \mathbb{R}^{d+(T+1)(K-1)}$ as follows. 
$$\tilde{\mathbf{v}} = \left[\frac{\ww'}{Z}\;\;\frac{\thetaa'}{Z}\;\;\mathbf{u}^1\;\;\mathbf{u}^2\;\;\ldots\;\;\mathbf{u}^T\right]'$$ 
 $Z$ is chosen such that $||\tilde{\mathbf{v}}||_2^2=1$. Thus, 
$$Z^2=1+\frac{\sum_{t=1}^N\sum_{i\in \bI^t}(d_i^t)^2}{\Delta^2}=1+\frac{D^2}{\Delta^2}$$
We also see that $||\tilde{\xx}^t_i||^2_2 = ||\xx^t||^2_2 +  1+\Delta^2 \leq R_2^2+\Delta^2+1,\;\forall i \in \bI^t,\;\forall t \in [T]$. Moreover,
\begin{eqnarray*}
z_i^t\tilde{\mathbf{v}}.\tilde{\xx}^t_i &=& \frac{z_i^t(\ww.\xx^t-\theta_i)}{Z} + \frac{d_i^t}{Z}\\
&\geq & \frac{z_i^t(\ww.\xx^t-\theta_i)}{Z} + \frac{\gamma - z_i^t(\ww.\xx^t-\theta_i)}{Z}=\frac{\gamma}{Z}
\end{eqnarray*}
Thus, $\tilde{\mathbf{v}}$ correctly classifies all the examples in $\mathbb{R}^{d+(T+1)(K-1)}$ with margin at least $\frac{\gamma}{Z}$. Thus, by using the mistake bound given in Theorem~\ref{thm1}, we get
$$\sum_{t=1}^T L_I^{MAE}(f^t(\xx^t),\thetaa^t,y_l^t,y_r^t)\leq \frac{Z^2(R_2^2+\Delta^2+1)K_1}{\gamma^2}$$
where $K_1=K-c-1$. To find the least upper bound, we minimize the RHS above with respect to $\Delta$. For minimum $\Delta^* = (D^2(R_2^2+1))^{\frac{1}{4}}$.
Replacing $\Delta$ by $\Delta^*$, we get
$$\sum_{t=1}^T L_I^{MAE}(f^t(\xx^t),\thetaa^t,y_l^t,y_r^t)\leq \frac{(D+\sqrt{R_2^2+1})^2(K-c-1)}{\gamma^2}$$
\end{proof}

\subsubsection*{Regret Analysis}
Let $S=\{(\xx^1,y_l^1,y_r^1),\ldots,(\xx^T,y_l^T,y_r^T)\}$ be the input sequence. Let $\{\ww^t,\thetaa^t\}_{t=1}^T$ be the sequence of parameter vectors generated by an online ranking algorithm $\mathcal{A}$. Then, regret of algorithm $\mathcal{A}$ is defined as 
\begin{align*}
R_T(\mathcal{A}) & = \sum_{t=1}^T L_I^{IMC}\left(\ww^t.\xx^t,\thetaa^t,y_l^t,y_r^t\right) \\
& \;\;- \min_{(\ww,\thetaa)\in \Omega}\;\sum_{t=1}^T L_I^{IMC}\left( \ww.\xx^t,\thetaa,y_l^t,y_r^t\right)
\end{align*}
For online gradient descent applied on convex cost functions, the regret bound analysis is given by \citet{Zinkevich:2003}. We know that the objective function of PRIL is also convex. Motivated by that, here we find the regret bound for PRIL.
\begin{theorem}
\label{thm:pril2}
Let $S=\{(\xx^1,y_l^1,y_r^1),\ldots,(\xx^T,y_l^T,y_r^T)\}$ be an input sequence such that $\xx^t \in \R^d,\;\forall t \in [T]$. Let $R_2^2 = \max_{t\in [T]}\;||\xx^t||^2$ and $c = \min_{t\in [T]}(y_r^t-y_l^t)$. Let $\Omega =\{\vv \in \R^{d+K-1}\;:\;||\vv||_2\leq \Lambda\}$. Let $\vv^t=(\ww^t,\thetaa^t),\;t=1\ldots T+1$ be the sequence of vectors in $\Omega$ such that $\forall t\geq 1$, $\vv^{t+1} = \vv^t -\nabla_t$ where $\nabla_t$ belongs to the sub-gradient set of $L_I^{IMC}\left(\ww.\xx^t,\thetaa,y_l^t,y_r^t\right)$ at $\vv^t$. Then,
\begin{equation*}
 R_T(\text{PRIL}) \leq \frac{1}{2}\left[\Lambda^2 + T(R^2_2 +1)(K-c-1)\right]
\end{equation*}
\end{theorem}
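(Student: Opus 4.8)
The plan is to run the classical Zinkevich-style analysis of online subgradient descent with unit step size, using the squared distance to the best fixed comparator as a potential. Write $L_t(\vv)=L_I^{IMC}(\ww\cdot\xx^t,\thetaa,y_l^t,y_r^t)$ for $\vv=(\ww,\thetaa)\in\R^{d+K-1}$; this is a convex function of $\vv$ (a sum of hinge terms in the coordinates of $\vv$), and the update in the statement is exactly $\vv^{t+1}=\vv^t-\nabla_t$ with $\nabla_t\in\partial L_t(\vv^t)$, with no projection step. Let $\vv^*=(\ww,\thetaa)$ be any minimiser of $\sum_{t=1}^T L_t$ over $\Omega$, so that $\|\vv^*\|_2\le\Lambda$ and, by definition, $R_T(\text{PRIL})=\sum_{t=1}^T\bigl(L_t(\vv^t)-L_t(\vv^*)\bigr)$.

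First I would expand the potential across one trial: $\|\vv^{t+1}-\vv^*\|_2^2=\|\vv^t-\vv^*\|_2^2-2\langle\nabla_t,\vv^t-\vv^*\rangle+\|\nabla_t\|_2^2$. By the subgradient inequality for the convex $L_t$ we have $\langle\nabla_t,\vv^t-\vv^*\rangle\ge L_t(\vv^t)-L_t(\vv^*)$, hence $L_t(\vv^t)-L_t(\vv^*)\le\tfrac12\bigl(\|\vv^t-\vv^*\|_2^2-\|\vv^{t+1}-\vv^*\|_2^2\bigr)+\tfrac12\|\nabla_t\|_2^2$. Summing over $t=1,\dots,T$ makes the distance terms telescope; dropping the nonnegative term $\tfrac12\|\vv^{T+1}-\vv^*\|_2^2$ and using the PRIL initialisation $\vv^1=\zero$, which gives $\|\vv^1-\vv^*\|_2^2=\|\vv^*\|_2^2\le\Lambda^2$, yields $R_T(\text{PRIL})\le\tfrac12\Lambda^2+\tfrac12\sum_{t=1}^T\|\nabla_t\|_2^2$.

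It then remains to bound $\|\nabla_t\|_2^2$ uniformly in $t$, which is the main technical step. Here I would use the explicit form of the subgradient that PRIL follows: on trial $t$ only the violated constraints $\sM^t=\{i\in\bI^t:\,z_i^t(\ww^t\cdot\xx^t-\theta_i^t)\le 0\}$ contribute, so the $\ww$-block of $\nabla_t$ is built from $\sum_{i\in\sM^t}z_i^t\xx^t$ and the $\thetaa$-block has exactly $m^t=|\sM^t|$ nonzero entries, each equal to $\pm 1$. Using $\|\xx^t\|_2^2\le R_2^2$ and $(z_i^t)^2=1$ this gives $\|\nabla_t\|_2^2\le(R_2^2+1)m^t$, and since $m^t\le|\bI^t|=(y_l^t-1)+(K-y_r^t)=K-1-(y_r^t-y_l^t)\le K-c-1$, we get $\|\nabla_t\|_2^2\le(R_2^2+1)(K-c-1)$ for every $t$. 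Substituting this into the telescoped inequality gives $R_T(\text{PRIL})\le\tfrac12\bigl[\Lambda^2+T(R_2^2+1)(K-c-1)\bigr]$, as claimed. The delicate point is precisely this per-trial gradient bound: one must account for the fact that $\nabla_t$ simultaneously aggregates all currently violated threshold constraints and then cap their number through the interval width $y_r^t-y_l^t\ge c$; everything else is the standard potential-telescoping regret argument of \citet{Zinkevich:2003} for online (sub)gradient descent on convex losses.
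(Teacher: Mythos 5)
Your proposal follows the paper's proof essentially step for step: the same Zinkevich potential $\|\vv^t-\vv^*\|_2^2$, the same subgradient inequality and telescoping, the same use of $\vv^1=\zero$ to obtain the $\Lambda^2$ term, and the same per-trial bound on $\|\nabla_t\|_2^2$ capped via $m^t\le K-c-1$. As a reconstruction of the paper's argument it is on target.

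One caveat, which applies equally to the paper's own writeup: the step you yourself single out as the delicate one, $\|\nabla_t\|_2^2\le(R_2^2+1)m^t$, does not hold as stated. The $\ww$-block of $\nabla_t$ is $-\bigl(\sum_{i\in\sM^t}z_i^t\bigr)\xx^t$, i.e.\ a single copy of $\xx^t$ scaled by an integer whose magnitude can equal $m^t$ (the violated constraints typically all lie on the same side of the interval, so the $z_i^t$ do not cancel). Hence the $\ww$-contribution to $\|\nabla_t\|_2^2$ is $\bigl(\sum_{i\in\sM^t}z_i^t\bigr)^2\|\xx^t\|_2^2$, which can be as large as $(m^t)^2R_2^2$, not $m^tR_2^2$. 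The honest per-trial bound is $\|\nabla_t\|_2^2\le(m^t)^2R_2^2+m^t\le(K-c-1)^2(R_2^2+1)$ --- precisely the bound used correctly in the proof of Theorem~\ref{thm1} --- and propagating it through the telescoped sum yields $R_T(\text{PRIL})\le\frac{1}{2}\left[\Lambda^2+T(R_2^2+1)(K-c-1)^2\right]$. The paper makes the identical leap (it writes $\|\sum_{i\in\bI^t}\tau_i^t\|^2\le\sum_{i\in\bI^t}|\tau_i^t|$), so you have faithfully reproduced the argument, flaw included; just be aware that the factor $(K-c-1)$ in the stated bound should really be $(K-c-1)^2$ unless one can separately argue that at most one constraint is violated per trial.
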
 
\begin{proof}
Let $\vv=(\ww,\thetaa)\in \Omega$. Using the convexity property of $L_I^{IMC}$, we get
\begin{align*}
& L_I^{IMC}\left( \ww^t.\xx^t,\thetaa^t,y_l^t,y_r^t \right)- L_I^{IMC}\left( \ww.\xx^t,\thetaa,y_l^t,y_r^t\right)\\
& \leq \nabla L_I^{IMC}\left( \ww^t.\xx^t, \thetaa^t,y_l^t,y_r^t\right). (\vv^t - \vv)\\
&= (\vv^t-\vv^{t+1}).(\vv^t-\vv)\;\;\;\;\; \text{using the update in Algorithm~\ref{algo2}} \\
& = \frac{1}{2}\left[ ||\vv-\vv^t||^2 -||\vv-\vv^{t+1}||^2 + ||\vv^t-\vv^{t+1}||^2 \right]\\
& = \frac{1}{2}\left[ ||\vv-\vv^t||^2 -||\vv-\vv^{t+1}||^2\right]  \\
& \;\;\;\;+\frac{1}{2}||\nabla L_I^{IMC}\left( \ww^t.\xx^t, \thetaa^t,y_l^t,y_r^t\right)||^2 
\end{align*}
But,
\begin{align*}
& ||\nabla L_I^{IMC}\left( \ww^t.\xx^t, \thetaa^t,y_l^t,y_r^t\right)||^2\\
&= ||\sum_{i \in \bI^t}\tau_i^t \xx^t||^2 + ||\sum_{i\in \bI^t}\tau_i^t||^2\\
& =(||\xx^t||^2 +1) ||\sum_{i\in \bI^t}\tau_i^t||^2 \leq (R_2^2+1)\left( \sum_{i\in \bI^t}|\tau_i^t|\right)\\
&\leq (R_2^2 +1) (K-y_r^t + y_l^t -1)\leq (R^2_2 +1)(K-c-1)
\end{align*}
Thus,
\begin{align}
\nonumber & L_I^{IMC}\left( \ww^t.\xx^t,\thetaa^t,y_l^t,y_r^t \right)- L_I^{IMC}\left( \ww.\xx^t,\thetaa,y_l^t,y_r^t\right) \\
& \leq \frac{1}{2}\big{[} ||\vv-\vv^t||^2 -||\vv-\vv^{t+1}||^2+ (R^2_2 +1)(K-c-1) \big{]}\label{eq:diffeq}
\end{align}
Summing eq.(\ref{eq:diffeq}) on both sides from 1 to $T$, we get,
\begin{align*}
& \sum_{t=1}^T L_I^{IMC}\left( \ww^t.\xx^t,\thetaa^t,y_l^t,y_r^t \right)- \sum_{t=1}^T L_I^{IMC}\left( \ww.\xx^t,\thetaa,y_l^t,y_r^t\right) \\
&\leq \frac{1}{2}\big{[} ||\vv-\vv^1||^2 -||\vv-\vv^{T+1}||^2+ T(R^2_2 +1)(K-c-1) \big{]}\\
&\leq \frac{1}{2}\big{[} ||\vv||^2 + T(R^2_2 +1)(K-c-1) \big{]}
\end{align*}
where we have used the fact that $||\vv-\vv^{T+1}||^2 \geq 0$. We know that $\min_{\vv \in \Omega}||\vv||_2^2 =\Lambda^2$. Since the above holds for any $\vv=(\ww,\thetaa) \in \Omega$, we have
\begin{align*}
R_T(\text{PRIL})
\leq \frac{1}{2}\left[\Lambda^2 + T(R^2_2 +1)(K-c-1) \right]
\end{align*}
\end{proof}

\section{Multiplicative PRIL}
In this section, we propose a multiplicative algorithm for PRIL called {\em M-PRIL}. In this algorithm, the weight vector $\ww$ and the thresholds $\thetaa$ are modified in a multiplicative manner. The algorithm is inspired from the Winnow algorithm proposed for learning linear predictors \cite{KIVINEN19971}. In M-PRIL, at every iteration $t$, the weight vector $\ww^t$ and the thresholds $\thetaa^t$ are maintained such that $\|\ww^t\|^2+ \|\thetaa^t\|^2 = 1$.
The complete algorithm M-PRIL is described in Algorithm~\ref{algo:mpril}. 
\begin{algorithm}[h]
\caption{M-PRIL}
\label{algo:mpril}
\begin{algorithmic}
\STATE {\bf Input: } Training Dataset $\mathcal{S}$\;
\STATE {\bf Initialize} $t=1$, $\ww_1=\zero$, $\theta_1^{1}=\ldots=\theta_{K-1}^1=0$\;
\FOR{$i\leftarrow 1$ to $T$}
\STATE Get example $\xx^t$ and its $(y_l^t,y_r^t)$
\FOR{$i\leftarrow 1$ to $y_l^t-1$}
\STATE $z_i^{t}=+1$
\ENDFOR
\FOR{$i\leftarrow y_r^t$ to $K-1$}
\STATE $z_i^{t}=-1$
\ENDFOR
\STATE Initialize $\tau_i^t=0,\;i\in [K-1]$
\FOR{$i\in \bI^t$}
\IF{$z_i^t(\ww^t.\xx^t -\theta_i^t)\leq 0$}
\STATE $\tau_i^t = z_i^t$
\ENDIF
\ENDFOR
\STATE $\mathcal{Z}^t = \sum_{i=1}^d w_i^te^{\eta x_i^t\sum_{k=1}^{K-1}\tau_k^t} + \sum_{i=1}^{K-1}\theta_i^te^{-\eta\sum_{k=1}^{K-1}\tau_k^t}$
\FOR{$i \in [d]$}
\STATE $w_i^{t+1} =\frac{1}{\mathcal{Z}^t} w_i^t e^{\eta x_i^t\sum_{k=1}^{K-1}\tau_k^t}$
\ENDFOR
\FOR{$i \in [K-1]$}
\STATE $\theta_i^{t+1}=\frac{1}{\mathcal{Z}^t}\theta_i^te^{-\eta\tau_i^t}$
\ENDFOR
\ENDFOR
\STATE {\bf Output}: $h(\xx)=\min_{i\in [K]}\big{\{}i\;:\;\ww^{T+1}.\xx-\theta_i^{T+1} <0 \big{\}}$
\end{algorithmic}
\end{algorithm}

\begin{lemma}
\label{lemma:mpril1}
{\bf Order Preservation: } Let $\ww^t \in \R^d$ and $\thetaa^t \in \R^{K-1}$ denote the current parameters of current ranking classifier. Assume that $\theta_1^t \leq \theta_2^t \leq \ldots \leq \theta_t^{K-1}$. Let $\ww^{t+1}$ and $\thetaa^{t+1}$ be the new parameters generated by the M-PRIL algorithm after observing $(\xx^t,y_l^t,y_r^t)$. Then, $\theta_1^{t+1} \leq \theta_2^{t+1} \leq \ldots \leq \theta_{t+1}^{K-1}$. 
\end{lemma}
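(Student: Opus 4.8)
The plan is to reduce the statement to what happens in a single trial and to mirror the case analysis used in the proof of Lemma~\ref{lemma:pril1}. In M-PRIL every threshold is updated by $\theta_i^{t+1} = (\mathcal{Z}^t)^{-1}\,\theta_i^t\,e^{-\eta\tau_i^t}$, and the normaliser $\mathcal{Z}^t>0$ is the \emph{same} for all $i$; dividing all thresholds by one common positive number does not change their order, so it is enough to prove that $i\mapsto \theta_i^t\,e^{-\eta\tau_i^t}$ is non-decreasing given that $i\mapsto\theta_i^t$ is. The multiplier $e^{-\eta\tau_i^t}$ takes only the three values $e^{-\eta}<1$, $1$, $e^{\eta}>1$ according as $\tau_i^t$ equals $+1$, $0$, or $-1$.

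Next I would pin down the shape of the vector $(\tau_i^t)_i$, exactly as in Lemma~\ref{lemma:pril1}. Since $\theta_1^t\le\dots\le\theta_{K-1}^t$ and $z_i^t=+1$ on $\{1,\dots,y_l^t-1\}$, the violated set $\{i\le y_l^t-1:\ \ww^t.\xx^t\le\theta_i^t\}$ is a suffix of that block, so $\tau_i^t$ rises from $0$ to $1$ there; on $\{y_l^t,\dots,y_r^t-1\}$ no threshold is touched, so $\tau_i^t=0$; and since $z_i^t=-1$ on $\{y_r^t,\dots,K-1\}$, the violated set $\{i\ge y_r^t:\ \ww^t.\xx^t\ge\theta_i^t\}$ is a prefix of that block, so $\tau_i^t$ rises from $-1$ to $0$. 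Then I would run through the same block-by-block cases for a consecutive pair $(i,i+1)$ as in Lemma~\ref{lemma:pril1}. Whenever $\tau_i^t=\tau_{i+1}^t$ the two thresholds are rescaled by identical positive factors and order is immediate, so only the ``transition'' pairs $(\tau_i^t,\tau_{i+1}^t)\in\{(1,0),(0,1),(0,-1),(-1,0)\}$ at the three block boundaries need real work.

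The main obstacle is precisely these transition pairs. The key is the invariant that the thresholds stay non-negative — which holds because they are initialised to $0$ and the M-PRIL update is multiplicative (indeed $\theta_i^t\equiv0$, which collapses every transition inequality to the trivial $0\le0$). More generally, with $\theta_i^t\ge0$ one scales $\theta_i^t$ down by $e^{-\eta}\le1$ in the $(1,0)$ case and $\theta_{i+1}^t$ up by $e^{\eta}\ge1$ in the $(0,-1)$ case, and neither move can break $\theta_i^t\le\theta_{i+1}^t$. The interior transitions $(0,1)$ and $(-1,0)$ are the crux: there one has the crossing relation at the transition index ($\theta_i^t<\ww^t.\xx^t\le\theta_{i+1}^t$, resp.\ $\theta_i^t\le\ww^t.\xx^t<\theta_{i+1}^t$), and it must be combined with the sign invariant carried by the multiplicative update. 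I expect this to be the delicate point, and the place where the argument genuinely departs from the additive case of Lemma~\ref{lemma:pril1}: there the gaps between distinct thresholds were integers of size at least $1$ (the fact ``$\theta_i^t\in\mathbb{Z}$''), whereas here no such slack is available and one must lean on non-negativity instead.
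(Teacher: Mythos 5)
Your reduction (divide out the common positive normaliser $\mathcal{Z}^t$, then check that $i\mapsto\theta_i^t e^{-\eta\tau_i^t}$ is non-decreasing block by block) is exactly the paper's strategy, and your treatment of the boundary pairs $(\tau_i^t,\tau_{i+1}^t)=(1,0)$ and $(0,-1)$ via non-negativity is correct. But the proposal stops at the two interior transitions $(0,1)$ and $(-1,0)$ with ``I expect this to be the delicate point,'' and that is a genuine gap: those cases do not follow from $0\le\theta_i^t\le\theta_{i+1}^t$ together with the crossing relation. Concretely, take the $(0,1)$ transition with $\theta_i^t=1$, $\theta_{i+1}^t=1.01$ and $\ww^t.\xx^t=1.005$: the update compares $\theta_i^t\cdot 1=1$ against $\theta_{i+1}^t e^{-\eta}=1.01\,e^{-\eta}$, which is smaller than $1$ for any $\eta>\log(1.01)$, so the order is broken. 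The symmetric failure occurs for $(-1,0)$.

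You should also know that the paper's own proof of precisely these cases is not sound: it deduces $e^{-\eta\tau_{i+1}^t}\ge e^{-\eta\tau_i^t}$ from $\I_{\{\ww^t.\xx^t-\theta_i^t\le 0\}}\le\I_{\{\ww^t.\xx^t-\theta_{i+1}^t\le 0\}}$, but that indicator inequality gives $\tau_i^t\le\tau_{i+1}^t$ and hence $e^{-\eta\tau_{i+1}^t}\le e^{-\eta\tau_i^t}$ --- the reverse of what is used. The only airtight route to the stated conclusion is the degenerate observation you make in passing: with the initialization $\thetaa^1=\zero$ of Algorithm~\ref{algo:mpril}, the purely multiplicative update forces $\thetaa^t=\zero$ for all $t$, so every comparison collapses to $0\le 0$. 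If instead the thresholds are initialized to a positive constant (as the paper's proof text suggests), the lemma as stated fails for fixed $\eta$ by the counterexample above. To complete your argument you must either invoke the $\thetaa^t\equiv\zero$ invariant explicitly, or add a hypothesis controlling the ratio $\theta_{i+1}^t/\theta_i^t$ relative to $e^{\eta}$; the case analysis alone cannot close the interior transitions.
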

\begin{proof}
Note that $\theta_t^i\in\mathbb{Z},\forall i \in \{1,\ldots, K-1\},\forall t\in \{1,\ldots, N\}$ as M-PRIL initializes $\theta_i^1=\frac{1}{K+d-1},\forall i\in \{1,\ldots,K-1\}$.
To show that M-PRIL preserves the ordering of the thresholds, we consider following different cases.
\begin{enumerate}
\item $i\in\{1,\ldots,y_l^t-2\}$: we see that,
 \begin{align*}
 \theta_{i+1}^{t+1} - \theta_i^{t+1} 
 &=  \frac{1}{\mathcal{Z}^t}\left(\theta_{i+1}^te^{-\eta \tau_{i+1}^t} - \theta_i^t e^{-\eta \tau_i^t}\right)
 \end{align*}
 We know that $\tau_i^t = z_i^t\I_{\{z_i^t(\ww^t.\xx^t-\theta_i^t)\leq 0\}}$ and $z_i^t = +1,\forall i \in \{1,\ldots,y_l^t-1\}$. Thus, there can be two cases only.
 \begin{enumerate}
 \item $\theta_{i+1}^t = \theta_i^t $: In this case, we simply get $\theta_{i+1}^{t+1} = \theta_i^{t+1}$.
 \item $\theta_{i+1}^t > \theta_i^t $: 
 We see that 
 \begin{align*}
 &\I_{\{\ww^t.\xx^t-\theta_i^t\leq 0\}} \leq \I_{\{\ww^t.\xx^t-\theta_{i+1}^t\leq 0\}}\\
 \Rightarrow & -\eta \tau_{i+1}^t \geq - \eta \tau_i^t\\
 \Rightarrow & e^{-\eta \tau_{i+1}^t} \geq e^{- \eta \tau_i^t}\\
 \Rightarrow & \theta_{i+1}^t e^{-\eta \tau_{i+1}^t} \geq \theta_i^t e^{- \eta \tau_i^t}
 \end{align*}
 Thus,
 $\theta_{i+1}^{t+1} - \theta_i^{t+1} \geq 0$.
 \end{enumerate}  
 \item $i=y_l^t-1$: In this case $\tau_{i+1}^t=0$. Thus, using the fact that
 $\theta_{i+1}^t - \theta_i^t \geq 0$, we get:
 \begin{align*}
\theta_{i+1}^{t+1} - \theta_i^{t+1} &=  \frac{1}{\mathcal{Z}^t}\left(\theta_{i+1}^t - \theta_i^t e^{-\eta\tau_i^t}\right) \\
& \geq \frac{1}{\mathcal{Z}^t}\left(\theta_{i+1}^t - \theta_i^t \right) \;\;\;\;\;\because \tau_i^t \in \{0,1\}\\
& \geq  0
 \end{align*}
 \item $i\in \{y_l^t,\ldots,y_r^t-2\}$: In this case, we see that $\tau_i^t = 0,\;\forall i\in \{y_l^t,\ldots,y_r^t-2\}$ . Thus,
$\theta_i^{t+1} = \frac{1}{\mathcal{Z}^t}\theta_i^t,\;\forall i\in \{y_l^t,\ldots,y_r^t-2\}$. Thus, $\theta_{i+1}^{t+1}=\frac{1}{\mathcal{Z}^t}\theta_{i+1}^t\geq\frac{1}{\mathcal{Z}^t}\theta_i^{t}=\theta_i^{t+1},\;\forall i \in \{y_l^t,\ldots,y_r^t-2\}$.
 \item $i=y_r^t-1$: In this case $\tau_i^t=0$. Also, $z_{i+1}^t=-1$. Thus, using the fact that
 $\theta_{i+1}^t - \theta_{i}^t \geq 0$, we get:
 \begin{align*}
\theta_{i+1}^{t+1} - \theta_{i}^{t+1} &=\frac{1}{\mathcal{Z}^t}\left(  \theta_{i+1}^te^{-\eta\tau_{i+1}^t} - \theta_{i}^t \right)\\
 &\geq  \frac{1}{\mathcal{Z}^t}\left(  \theta_{i+1}^t - \theta_{i}^t \right) \;\;\;\;\;\because \tau_{i+1}^t \in \{-1,0\}\\
 \geq 0
 \end{align*}
 \item $i\in\{y_r,\ldots,K-1\}$: we see that,
 \begin{align*}
 \theta_{i+1}^{t+1} - \theta_i^{t+1} &=  \frac{1}{\mathcal{Z}^t}\left(\theta_{i+1}^te^{-\eta\tau_{i+1}^t} - \theta_i^te^{-\eta\tau_{i}^t}\right)
 \end{align*}
We used the fact that $z_i^t = -1,\forall i \in \{y_r^t,\ldots,K-1\}$. Thus, there can be two cases only.
 \begin{enumerate}
 \item $\theta_{i+1}^t = \theta_i^t $: In this case, we simply get $\tau_{i+1}^t = \tau_i^t$. Thus,
 \begin{align*}\theta_{i+1}^{t+1} - \theta_i^{t+1}& = \frac{1}{\mathcal{Z}^t}\left( \theta_{i+1}^{t}e^{-\eta\tau_{i+1}^t} - \theta_i^{t}e^{-\eta\tau_i^t}\right)=0
 \end{align*}
 \item $\theta_{i+1}^t > \theta_i^t $: We see that
 \begin{align*}
 &\I_{\{\ww^t.\xx^t-\theta_i^t\leq 0\}} \leq \I_{\{\ww^t.\xx^t-\theta_{i+1}^t\leq 0\}}\\
 \Rightarrow &1- \I_{\{\ww^t.\xx^t-\theta_{i+1}^t\geq 0\}} \leq 1-\I_{\{\ww^t.\xx^t-\theta_{i}^t\geq 0\}}\\
  \Rightarrow & \I_{\{\ww^t.\xx^t-\theta_{i}^t\geq 0\}} \leq \I_{\{\ww^t.\xx^t-\theta_{i+1}^t\geq 0\}}\\
   \Rightarrow & \I_{\{z_{i}^t(\ww^t.\xx^t-\theta_{i}^t)\leq 0\}} \leq \I_{\{z_{i+1}^t(\ww^t.\xx^t-\theta_{i+1}^t)\leq 0\}}\\
 \Rightarrow &  \tau_{i+1}^t \leq  \tau_i^t\\
 \Rightarrow & e^{-\eta \tau_{i+1}^t} \geq e^{- \eta \tau_i^t}\\
 \Rightarrow & \theta_{i+1}^t e^{-\eta \tau_{i+1}^t} \geq \theta_i^t e^{- \eta \tau_i^t}
 \end{align*}
 Thus,
 $\theta_{i+1}^{t+1} - \theta_i^{t+1} \geq 0$.
 \end{enumerate}  
\end{enumerate}

\end{proof}

Thus, M-PRIL preserves the order of thresholds in consecutive rounds. We now find the mistake bound for M-PRIL. 

\begin{theorem}
Let $S=\{(\xx^1,y_l^1,y_r^1),\ldots,(\xx^T,y_l^T,y_r^T)\}$ be an input sequence to M-PRIL algorithm. Let $\|\xx^t\|_\infty \leq 1,\;\forall t \in [T]$ and $c=\min_{t\in [T]}(y_r^t-y_l^t)$. Let
$\exists \gamma >0$, $\ww^*\in\R^d$ and $\theta_1^*,\ldots,\theta_{K-1}^* \in \R$ such that $\|\ww^*\|_1  + \|\thetaa^*\|_1 = 1$ and $\min_{i\in \bI^t} z_i^t(\ww^*.\xx^t-\theta_i^*)\geq \gamma,\;\forall t\in [T]$.
 Then, the ranking loss of M-PRIL is 
 \begin{equation*}
 \sum_{t=1}^TL_\text{I}^{\text{MAE}}(f^t(\xx^t),\thetaa^t,y_l^t,y_r^t) \leq (K-c-1)^2\frac{\log (K+d-1)}{\gamma^2}
 \end{equation*}
 where $f^t(\xx^t) = \ww^t.\xx^t$.
\end{theorem}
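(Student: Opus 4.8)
The plan is to reproduce the structure of the proof of Theorem~\ref{thm1}, but with the Euclidean potential $\|\vv^t\|_2^2$ replaced by a relative-entropy (KL-divergence) potential, as is standard for Winnow/EG-type multiplicative updates. Write $\vv^t=[(\ww^t)'\;(\thetaa^t)']'$ and $\vv^*=[(\ww^*)'\;(\thetaa^*)']'\in\R^{d+K-1}$. The normalization by $\mathcal{Z}^t$ in Algorithm~\ref{algo:mpril}, together with the uniform initialization $v_j^1=\frac{1}{K+d-1}$ used in the proof of Lemma~\ref{lemma:mpril1}, makes every $\vv^t$ a probability vector on $K+d-1$ coordinates; the hypothesis $\|\ww^*\|_1+\|\thetaa^*\|_1=1$ makes $\vv^*$ one as well (after, if necessary, the usual split of each coordinate into a positive and a negative part, which does not change the form of the bound). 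So the potential $\Phi^t=\sum_j v_j^*\log(v_j^*/v_j^t)$ is well defined and nonnegative, with $\Phi^1=\log(K+d-1)+\sum_j v_j^*\log v_j^*\le\log(K+d-1)$.

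First I would rewrite the M-PRIL update in the combined multiplicative form $v_j^{t+1}=v_j^t e^{\eta g_j^t}/\mathcal{Z}^t$, where $g^t$ is the vector whose $\ww$-block equals $(\sum_k\tau_k^t)\xx^t$ and whose $\theta$-block has entries $-\tau_i^t$; this $g^t$ is exactly $-\nabla L_I^{IMC}$ evaluated at $\vv^t$, as in PRIL (Algorithm~\ref{algo2}). It suffices to consider trials where a mistake occurs, i.e. $m^t=|\sM^t|\ge 1$, with $\tau_i^t=z_i^t$ for $i\in\sM^t$ and $0$ otherwise. On such a trial,
\begin{equation*}
\Phi^t-\Phi^{t+1}=\sum_j v_j^*\log\frac{v_j^{t+1}}{v_j^t}=\eta\,\langle\vv^*,g^t\rangle-\log\mathcal{Z}^t,
\end{equation*}
using $\sum_j v_j^*=1$. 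I would then bring in two estimates. For the first term, $\langle\vv^*,g^t\rangle=\sum_{i\in\sM^t}z_i^t(\ww^*\cdot\xx^t-\theta_i^*)\ge\gamma m^t$ by the margin hypothesis. For the normalizer, $\|\xx^t\|_\infty\le 1$ and $|\sum_k\tau_k^t|\le\sum_k|\tau_k^t|=m^t$ give $|g_j^t|\le m^t$ for every coordinate $j$; a convexity (Hoeffding-type) bound on $\log\sum_j v_j^t e^{\eta g_j^t}$ then yields $\log\mathcal{Z}^t\le\eta\,\langle\vv^t,g^t\rangle+\frac{1}{2}\eta^2(m^t)^2$, and since a mistake occurred $\langle\vv^t,g^t\rangle=\sum_{i\in\sM^t}z_i^t(\ww^t\cdot\xx^t-\theta_i^t)\le 0$, so $\log\mathcal{Z}^t\le\frac{1}{2}\eta^2(m^t)^2$.

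Combining, each mistaken trial satisfies $\Phi^t-\Phi^{t+1}\ge\eta\gamma m^t-\frac{1}{2}\eta^2(m^t)^2$, while non-mistaken trials contribute $0$. Summing over $t$, telescoping, and using $\Phi^{T+1}\ge 0$, $\Phi^1\le\log(K+d-1)$, and $m^t\le K-(y_r^t-y_l^t)-1\le K-c-1$ (hence $(m^t)^2\le(K-c-1)m^t$), I obtain $\big(\eta\gamma-\frac{1}{2}\eta^2(K-c-1)\big)\sum_t m^t\le\log(K+d-1)$; choosing $\eta$ to make this coefficient as large as possible (of order $\gamma/(K-c-1)$) and recalling $m^t=L_I^{MAE}(f^t(\xx^t),\thetaa^t,y_l^t,y_r^t)$ gives the stated bound, up to the precise constant (which depends on the exact estimate chosen for $\mathcal{Z}^t$ and on $\eta$).

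I expect the delicate step to be the control of the normalizer $\mathcal{Z}^t$. In the additive analysis of Theorem~\ref{thm1} the increment $\|\vv^{t+1}\|_2^2-\|\vv^t\|_2^2$ expands exactly; here the exponential normalizer must be bounded by a second-order argument, and because the coordinates of $g^t$ have magnitude up to $m^t$ rather than $1$, the second-order term carries a factor $(m^t)^2$. It is precisely this factor — after the bound $(m^t)^2\le(K-c-1)m^t$ and the optimization over $\eta$ — that produces the extra $(K-c-1)$, i.e. the $(K-c-1)^2$ in the final mistake bound, in contrast to the single $(K-c-1)$ in Theorem~\ref{thm1}. A secondary point to handle carefully is that the relative-entropy potential requires a nonnegative comparator, which is why the coordinate-doubling reduction is invoked.
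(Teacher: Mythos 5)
Your proposal is correct and follows essentially the same route as the paper: both use the relative-entropy potential $D_{KL}(\vv^*\|\vv^t)$, telescope it against the uniform initialization to get the $\log(K+d-1)$ budget, lower-bound the per-trial progress by $\eta\gamma m^t$ via the margin hypothesis, upper-bound $\log\mathcal{Z}^t$ after discarding the non-positive term $\sum_{i\in\sM^t} z_i^t(\ww^t.\xx^t-\theta_i^t)$, and finally tune $\eta$. The one substantive difference is how the log-normalizer is controlled. The paper interpolates each exponent inside the worst-case range $[-(K-c-1),\,K-c-1]$, pays $\log\cosh\big(\eta(K-c-1)\big)$ per trial, and after optimizing $\eta$ (via the function $g$) this produces the $(K-c-1)^2$ factor. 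You instead bound each exponent by $m^t$ and apply Hoeffding's lemma, giving a second-order term $\tfrac{1}{2}\eta^2(m^t)^2\leq\tfrac{1}{2}\eta^2(K-c-1)m^t$; optimizing $\eta=\gamma/(K-c-1)$ then yields $\sum_t m^t\leq 2(K-c-1)\log(K+d-1)/\gamma^2$, which is one power of $(K-c-1)$ \emph{better} than the stated bound (for $K-c-1\geq 2$; in the degenerate case $K-c-1=1$ both your chain and the paper's carry an extra factor of $2$ that the theorem statement silently drops). So your closing claim that the $(m^t)^2$ term "produces the $(K-c-1)^2$" does not match your own inequalities --- what you have actually sketched is a sharper result that implies the theorem. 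A further point in your favour: you correctly observe that the KL potential requires a nonnegative comparator and invoke the positive/negative coordinate-doubling reduction, a step the paper omits even though $\ww^*,\thetaa^*$ are only constrained by $\|\ww^*\|_1+\|\thetaa^*\|_1=1$ and so may have negative entries.
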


\begin{proof}
Let $\vv^t = (\ww^t,\thetaa^t)$. We start by finding the decrease in the KL-divergence between $\vv^t$ and $\vv^*$. Thus,
\begin{align*}
\Delta_t &= D_{KL}(\vv^*||\vv^{t+1}) - D_{KL}(\vv^*||\vv^t) 
\end{align*}
The logic is to bound $\sum_{t \in [T]} \Delta_t$ from above and below. We first derive the upper bound. 
\begin{align*}
\Delta_t &= \sum_{i=1}^d w_i^*\log\left(\frac{w_i^{t+1}}{w_i^t}\right) + \sum_{k=1}^{K-1}\theta_k^*\log\left(\frac{\theta_k^{t+1}}{\theta_k^t}\right)\\
&=\sum_{i=1}^d w_i^*\log\left(\frac{\mathcal{Z}^t}{e^{\eta x_i^t\sum_{k=1}^{K-1}\tau_k^t}}\right) + \sum_{k=1}^{K-1}\theta_k^*\log\left(\frac{\mathcal{Z}^t}{e^{-\eta \tau_k^t}}\right)\\
&=\log\left(\mathcal{Z}^t\right)\left[\sum_{i=1}^d w_i^* + \sum_{k=1}^{K-1}\theta_k^*\right]-\eta \sum_{k=1}^{K-1}\tau_k^t(\ww^*.\xx^t-\theta_k^*)\\
&=\log\left(\mathcal{Z}^t\right)-\eta \sum_{k \in \mathcal{M}^t}z_k^t(\ww^*.\xx^t-\theta_k^*)\\
&\leq \log\left(\mathcal{Z}^t\right)-\eta \gamma m^t\;\;\;\because z_k^t(\ww^*.\xx^t - \theta^*_k)\geq \gamma,\;\forall k \in \bI^t
\end{align*}
Let $c_1=K-c-1$. Note that $\left|x_i^t \sum_{k=1}^{K-1}\tau_k^t\right| \leq c_1$. We now bound $\log\left(\mathcal{Z}^t\right)$ as follows:
\begin{align*}
\mathcal{Z}^t & = \sum_{i=1}^d w_i^te^{\eta x_i^t\sum_{k\in \mathcal{M}^t}\tau_k^t} + \sum_{i=1}^{K-1}\theta_i^te^{-\eta\tau_k^t}\\
& \leq  \sum_{i=1}^d w_i^t\left[ \frac{1+\frac{x_i^t\sum_{k=1}^{K-1}\tau^t_k}{c_1}}{2}e^{\eta c_1} +\frac{1-\frac{x_i^t\sum_{k=1}^{K-1}\tau^t_k}{c_1}}{2}e^{-\eta c_1}\right]\\
&\;\;\;\;\;+\sum_{k=1}^{K-1} \theta_k^t\left[ \frac{1+\frac{\tau_k^t}{c_1}}{2}e^{\eta c_1} + \frac{1-\frac{\tau_k^t}{c_1}}{2}e^{-\eta c_1}\right]\\
& =  \frac{1}{2}\left(e^{\eta c_1} +e^{-\eta c_1}\right) \left[\sum_{i=1}^d w_i^t +\sum_{k=1}^{K-1}\theta_k^t \right]\\
&\;\;\;\;\;+\sum_{k=1}^{K-1}  \frac{e^{\eta c_1} -e^{-\eta c_1 }}{2c_1}\tau_k^t(\ww^t.\xx^t-\theta_k^t)\\
& \leq  \frac{1}{2}\left(e^{\eta c_1} +e^{-\eta c_1}\right) \;\;\;\because \tau_k^t(\ww^t.\xx^t-\theta_k^t)\leq 0,\; \forall k
\end{align*}
Thus, $\Delta_t$ becomes
\begin{align*}
\Delta_t & \leq \log\left[\frac{1}{2}\left(e^{\eta(K-c-1)} + e^{-\eta(K-c-1)}\right)\right]-\eta\gamma m^t\\
& \leq m^t\left\{\log\left[\frac{1}{2}\left(e^{\eta(K-c-1)} + e^{-\eta(K-c-1)}\right)\right]-\eta\gamma\right\}
\end{align*}
In the above we used the fact that $m^t \geq 1$. Now, summing $\Delta_t$ over $t=1$ till $T$, we get
\begin{align*}
\sum_{t=1}^T\Delta_t
& \leq \log\left[\frac{1}{2}\left(e^{\eta(K-c-1)} + e^{-\eta(K-c-1)}\right)\right]\sum_{t=1}^T m^t\\
& \;\;\;\;-\eta\gamma\sum_{t=1}^T m^t
\end{align*}
Now, we find the lower bound on $\sum_{t=1}^T\Delta_t$ as follows:
\begin{align*}
\sum_{t=1}^T \Delta_t & =\sum_{t=1}^T \left( D_{KL}(\vv^*\| \vv^{t}) - D_{KL}(\vv^*\| \vv^{t+1})\right)\\
& = D_{KL}(\vv^*\| \vv^{1}) - D_{KL}(\vv^*\| \vv^{T+1})\\
&\geq - D_{KL}(\vv^*\| \vv^{1}) \;\;\;\;\;\because D_{KL}(\vv^*\| \vv^{T+1})\geq 0\\
& \geq -\log(K+d-1)
\end{align*}
Now comparing the lower and the upper bound on $\sum_{t=1}^T \Delta_t$, we get
\begin{align}
\sum_{t=1}^T m^t \leq \frac{\log(K+d-1)}{\left\{\log\left[\frac{2}{e^{\eta(K-c-1)} + e^{-\eta(K-c-1)}}\right]+\eta\gamma\right\}}\label{mistake-bound}
\end{align}
The RHS above can be minimized by taking $\eta$ as
\begin{align*}
\eta = \frac{1}{2(K-c-1)} \log\left( \frac{(K-c-1) + \gamma}{(K-c-1) - \gamma}\right)
\end{align*}
By substituting this value of $\eta$ in eq.(\ref{mistake-bound}), we get
\begin{align*}
\sum_{t=1}^T m^t \leq \frac{K+d-1}{g\left(\frac{\gamma}{(K-c-1)}\right)}
\end{align*}
where $g(t) = \frac{1+t}{2}\log(1+t) + \frac{1-t}{2}\log(1-t)$. It can be shown that $g(t) \geq \frac{t^2}{2}$ for $0\leq t \leq 1$. We know that \begin{align*}
 \gamma & \leq z_k^t(\ww^*.\xx^t-\theta_k^*) \;\;\; \text{by the definition of } (\ww^*,\thetaa^*)\\
& =z_k^t \vv^*.\hat{\xx}_k^t\;\;\;\;\;\hat{\xx}_k^t \text{ is defined in eq.(\ref{newx})}\\
& \leq \|\vv^*\|_1 \|\hat{\xx}_k^t\|_\infty\;\;\;\;\;\text{using Cauchy-Schwarz Inequality}\\
& =\max(\|\xx^t\|_\infty,1)=1\;\;\;\;\;\because \|\xx^t\|_\infty =1
\end{align*}
Thus, $\frac{\gamma}{K-c-1} \leq 1$. By putting the above inequality in $g(t)$, we get
\begin{align*}
\sum_{t=1}^T L_I^{MAE}&(f^t(\xx^t),\thetaa^t,y_l^t,y_r^t)  = \sum_{t=1}^T m^t \\
& \leq (K-c-1)^2 \frac{\log(K+d-1)}{\gamma^2}
\end{align*}
\end{proof}
Thus, M-PRIL would converge after making finite number of mistakes if there exists an ideal interval ranking classifier for the given training data.

\section{Experiments}
\label{sec:exp}
We now discuss the experimental results to show the effectiveness of the proposed approach. We first describe the datasets used.

\subsection{Dataset Description}
We show the simulation results on 3 datasets.  The description of these datasets are as given below.
\begin{enumerate}
\item {\bf Synthetic Dataset: }We generate points $\xx \in \R^2$ uniformly at random from the unit square $[0,1]^2$. For each point, the rank was assigned from the set $\{1,\ldots,5\}$ as $y= \max_r \{r\;:\;10(x_1- 0.5)(x_2- 0.5)+ \xi > b_r\}$
where $\mathbf{b}= (-\infty,-1,-0.1,0.25,1)$. $\xi\sim \mathcal{N}(0,0.125)$ (normally distributed with zero mean and a standard deviation of 0.125). The visualization of the synthetic dataset is provided in Figure~\ref{fig1}. We generated 100 sequences of instance-rank pairs for synthetic dataset each of length 10000.
\begin{figure}[h]
\begin{center}
\includegraphics[scale=0.28]{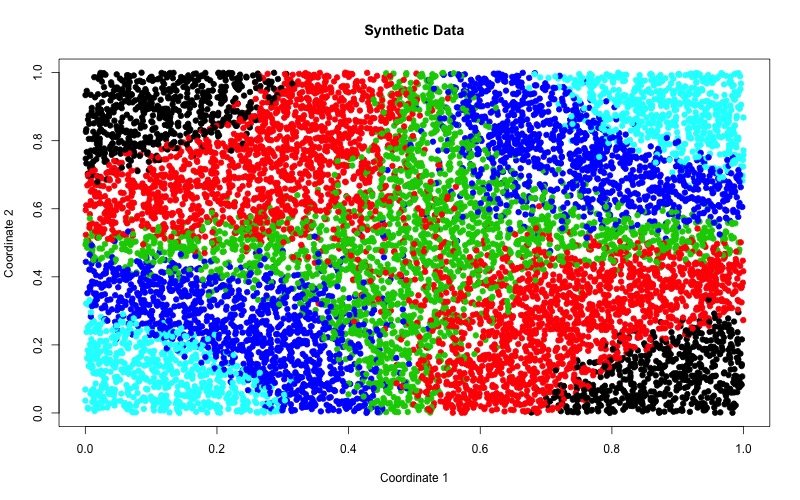}
\caption{Synthetic Dataset}
\label{fig1}
\end{center}
\end{figure}
\begin{figure*}[t]
\begin{tabular}{ccc}
\includegraphics[scale=0.2]{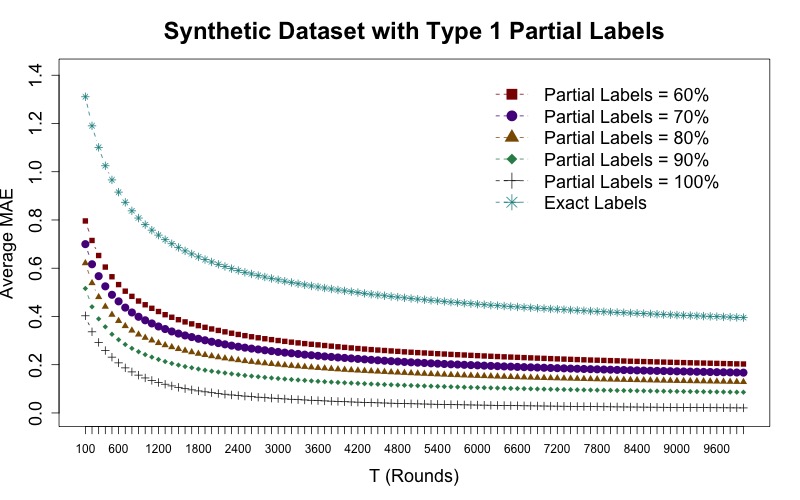} &
\includegraphics[scale=0.2]{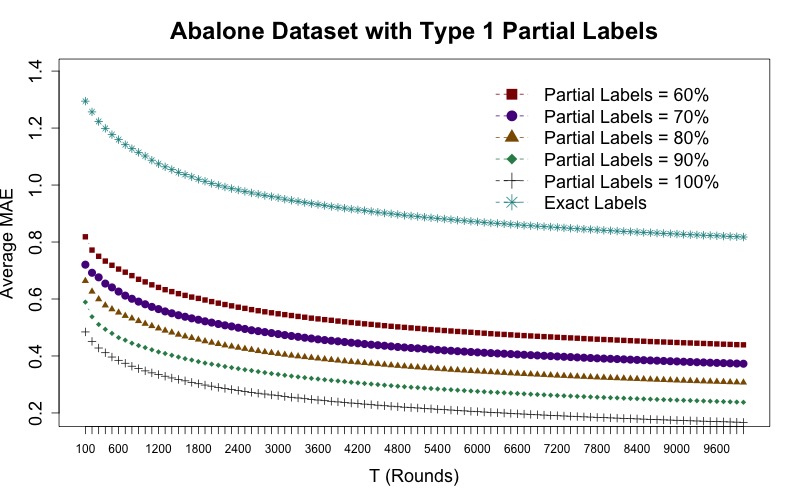} &
\includegraphics[scale=0.2]{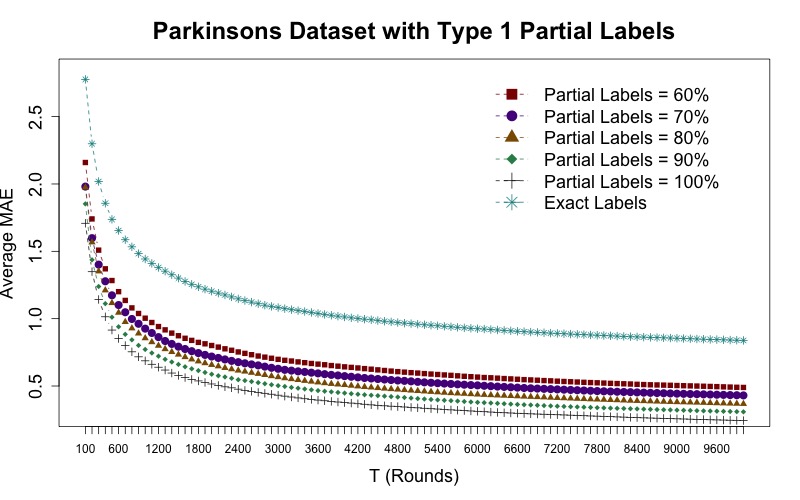}\\
\includegraphics[scale=0.2]{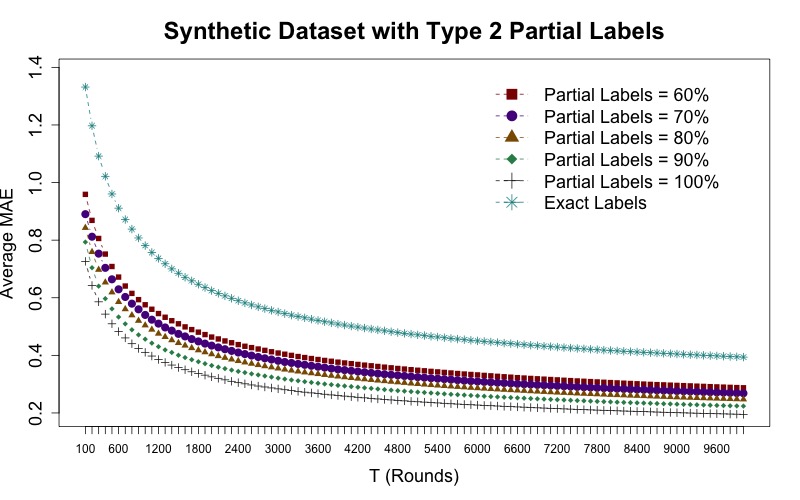}&
\includegraphics[scale=0.2]{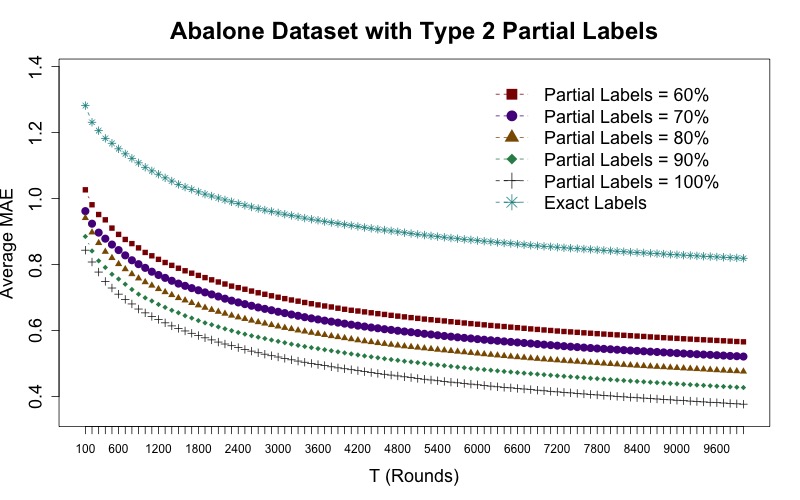} &
\includegraphics[scale=0.2]{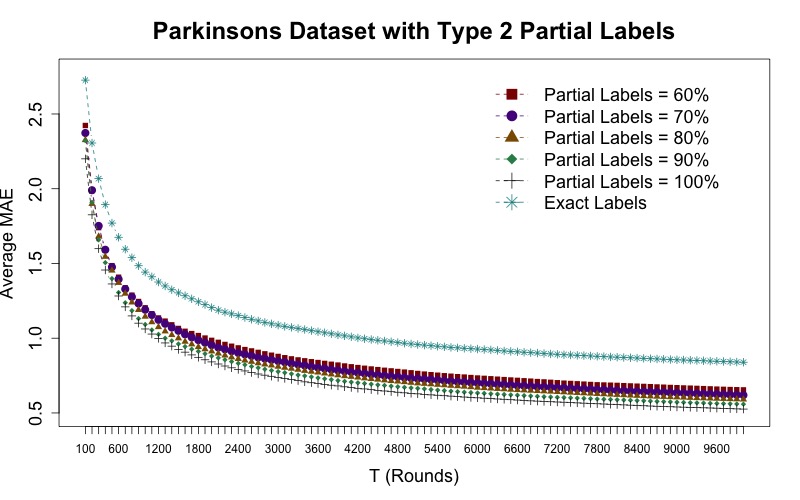}
\end{tabular}
\caption{Experiment: Varying the fraction of examples with partial labels. Variation in average MAE performance by varying the fraction of partial labels in the training set. The loss is computed using the partial labels.}
\label{fig:exp1}
\end{figure*}
\item {\bf Parkinsosns Telemonitoring Dataset: }This dataset \citep{Lichman:2013} contains biomedical voice measurements for 42 patients in various stages of Parkinson’s disease. There are total 5875 observations. There are 20 features and the target variables is total UPDRS. In the dataset, the total-UPDRS spans the range 7-55, with higher values representing more severe disability. We divide the range of total-UPDRS into 10 parts. We normalize each feature independently by making the mean 0 and standard deviation 1.
\item {\bf Abalone Dataset: }The age of
abalone \citep{Lichman:2013} is determined by counting the number of rings. There are 8 features and 4177 observations. The number of rings vary from 1 to 29. However, the distribution is very skewed. So, we divide the whole range into 4 parts, namely 1-7, 8-9, 10-12, 13-29. 
\end{enumerate}

The training data is comprised of two parts. One part contains the absolute labels for feature vectors and the other contains the partial labels. In our experiments, we keep 25\% of the examples which have correct label and rest 75\% examples having partial labels. 

{\bf Generating Interval Labels: }
Let there be $K$ categories. We consider two different methods of generating interval labels.
\begin{enumerate}
\item For $y\in\{2,\ldots,(K-1)\}$, the interval label was randomly chosen between
$[y-1,y]$ and $[y,y+1]$ where $y$ is the true label. For the class label $1$, the interval label was set to $[1,2]$. For class label $K$, the interval label was set to $[K-1,K]$.
\item For $y\in\{2,\ldots,(K-1)\}$, the interval labels were set to the interval $[y-1,y+1]$ where $y$ is the true label. For the class label $1$, the interval label was set to $[1,2]$. For class label $K$, the interval label was set to $[K-1,K]$.
\end{enumerate}

\subsection{Experimental Setup}
{\bf Kernel Functions Used: } We used following kernel functions for different datasets.
\begin{itemize}
\item {Synthetic: }$\kappa(\xx_1,\xx_2) = (\xx_1.\xx_2 + 1)^2$.
\item { Parkinson's Telemonitoring: }$\kappa(\xx_1,\xx_2) = \xx_1.\xx_2$.
\item { Abalone: }$\kappa(\xx_1,\xx_2) = (\xx_1.\xx_2 + 1)^3$.
\end{itemize}

\begin{figure}[h]
\includegraphics[scale=0.3]{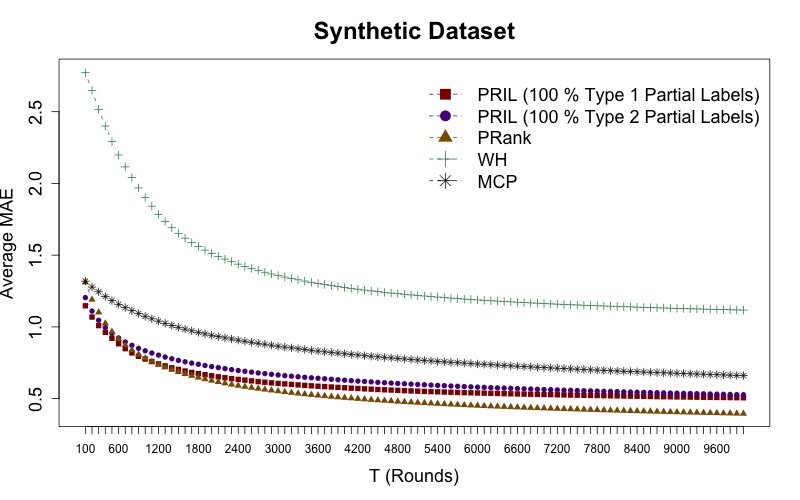}\\
\includegraphics[scale=0.3]{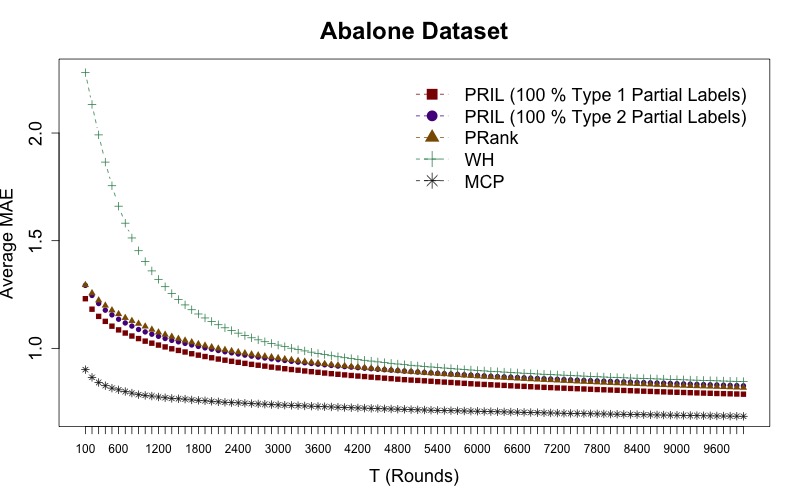}\\
\includegraphics[scale=0.3]{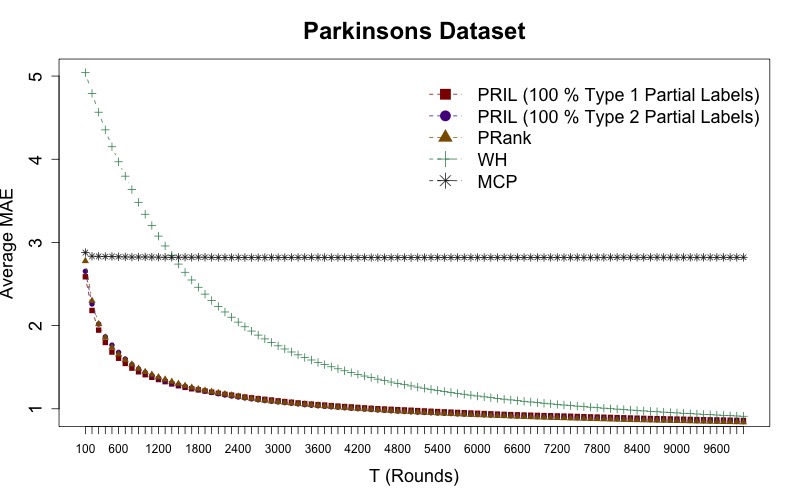}
\caption{PRIL vs. PRank, Widrow-Hoff and Multi-Class Perceptron (MCP). PRIL trained using partial labels and evaluated on exact labels.}
\label{fig:exp2}
\end{figure}
\subsection{Varying the Fraction of Partial Labels in PRIL}
We now discuss the performance of PRIL when we vary the fraction of partial labels in the training. We train with 60\%, 70\%, 80\%, 90\% and 100\% examples with partial labels. We compute the loss at each round with the same partial label used for updating the hypothesis. For PRIL, at each time step, we compute the average of $L_{MAE}$ (defined in eq.(\ref{eq:MAE_Loss})). The results are shown in Figure~\ref{fig:exp1}. We see that for all the datasets the average mean absolute error (MAE) is decreases faster as compared to the number of rounds ($T$). Also, the average MAE decreases with the increase in the fraction of examples with partial labels. This we observe in all 3 datasets and both types of partial labels. This happens because the allowed range for predicted values is more for partial labels as compared to the exact label. Consider an example $\xx^t$ with partial labels as $[y_l^t,y_r^t]$ ($y_r^t>y_l^t$) and exact label as $y^t$. The MAE for this example with partial labels is sum of $K-y_r^t+y_l^t-1$ losses. On the other hand, MAE becomes sum of $K-1$ losses if the label if exact for $\xx^t$. Thus, as we increase the fraction of partial labels in the training set, the average MAE decreases.
We see that training with no partial labels gets larger values of average MAE for all the datasets.

\subsection{Comparisons With Other Approaches}
We compare the proposed algorithm with (a) PRank \citep{Crammer:2001} by considering the exact labels, (b) Widrow-Hoff \citep{Widrow:1988} by posing it as a regression problem and (c) multi-class Perceptron \citep{Crammer2001}. For PRIL, at each time step, we compute the average of $L_{MAE}$ (defined in eq.(\ref{eq:MAE_Loss})). For PRank, Widrow-Hoff and MC-Perceptron, we find the average absolute error ($\frac{1}{T}\sum_{t=1}^T |\hat{y}_t-y_t|$). We repeat the process 100 times and average the instantaneous losses across the 100 runs. 

We train PRIL with 100\% partially labeled examples (Type 1 or Type 2). But when we compute the MAE, we use exact labels of examples. Which means, we used the model trained using PRIL with all partial labels and measure its performance with respect to the exact labels. This sets a harder evaluation criteria for PRIL. Moreover, in practice, we want a single predicted label and we want to see how similar it is as compared to the original label. Figure~\ref{fig:exp2} shows the comparison plot of PRIL with PRank, Widrow-Hoff (WH) and Multi-Class Perceptron (MCP). 

We see that PRIL does better as compared to Widrow-Hoff. This happens because Widrow-Hoff does not consider the categorical nature of the labels even though it respects the orderings of the labels. On the other hand, MCP is a complex model for solving a ranking problem. It also does not consider the ordering of the labels. We see that for Synthetic and Parkinsons datasets PRIL performs better that MCP. 

We observe that PRIL performance is comparable as compared to PRank \citep{Crammer:2001}. On Abalone and Parkinsons datasets, it performs similar to PRank. Which means that PRIL is able to recover the underlying classifier even if we have partial labels. This is a very interesting finding as we don't need to worry to provide exact labels. All we need is a range around the exact labels. Thus, PRIL appears to be a better way to deal with ordinal classification when we have uncertainties in the labels.

\section{Conclusions}
\label{sec:conclusions}
We proposed a new online algorithm called PRIL for learning ordinal classifiers when we only have partial labels for the examples. We show the correctness of the proposed algorithm. We show that PRIL converges after making finite number of mistakes whenever there exist an ideal partial labeling for every example. We also provide the mistake bound for general case. A regret bound is provided for PRIL. We also propose a multiplicative update algorithm for PRIL (M-PRIL). We show the correctness of M-PRIL and find its mistake bound. We experimentally show that PRIL is a very effective algorithm when we have partial labels.

\bibliography{Refereces_Ranking}
\bibliographystyle{icml2016}

\end{document}